\documentclass{article}

\usepackage{microtype}
\usepackage{graphicx}
\usepackage{subcaption}
\usepackage{booktabs} 

\usepackage{hyperref}

\usepackage[accepted]{icml2026}

\usepackage{amsmath}
\usepackage{amssymb}
\usepackage{mathtools}
\usepackage{amsthm}

\usepackage[capitalize,noabbrev]{cleveref}

\usepackage{algorithm}
\usepackage{algorithmic}
\usepackage{graphicx} 
\usepackage{adjustbox}
\usepackage{wrapfig,lipsum}
\usepackage{capt-of}
\usepackage{caption}
\usepackage{multirow}
\usepackage{tabularx}
\usepackage{isotope} 

\usepackage{url}            
\usepackage{amsfonts}       
\usepackage{nicefrac}       
\usepackage{xcolor,verbatim}         
\usepackage{makecell}

\theoremstyle{plain}
\newtheorem{theorem}{Theorem}[section]
\newtheorem{proposition}[theorem]{Proposition}

\theoremstyle{definition}

\theoremstyle{remark}

\usepackage[textsize=tiny]{todonotes}

\icmltitlerunning{Constrained Multi-Objective Reinforcement Learning with Max-Min Criterion}

\begin{document}

\twocolumn[
  \icmltitle{Constrained Multi-Objective Reinforcement Learning with Max-Min Criterion}



  \icmlsetsymbol{equal}{*}

\begin{icmlauthorlist}
    \icmlauthor{Giseung Park}{can,equal}
    \icmlauthor{Hyunyoung Nam}{kor,equal}
    \icmlauthor{Woohyeon Byeon}{kor}
    \icmlauthor{Amir Leshem}{isr}
    \icmlauthor{Youngchul Sung}{kor}
    \end{icmlauthorlist}

  \icmlaffiliation{can}{Robotics Institute, University of Toronto, 55 St George St, Toronto, Canada.}

    \icmlaffiliation{kor}{School of Electical Engineering, KAIST, Daejeon 34141, Republic of Korea.}
    \icmlaffiliation{isr}{Faculty of Engineering, Bar-Ilan University, Ramat Gan 52900, Israel}
    
    \icmlcorrespondingauthor{Youngchul Sung}{ycsung@kaist.ac.kr}

  \icmlkeywords{Machine Learning, ICML}

  \vskip 0.3in
]



\printAffiliationsAndNotice{\icmlEqualContribution}

\begin{abstract}
    Multi-Objective Reinforcement Learning (MORL) extends standard RL by optimizing policies with respect to multiple, often conflicting, objectives. While max-min MORL has emerged as an effective approach for promoting fairness, its applicability remains limited, particularly when constraints must be incorporated. In this paper, we propose a MORL framework that integrates the max-min criterion with explicit constraint satisfaction. We establish a theoretical foundation for the proposed framework and validate the resulting algorithm through convergence analysis and experiments in tabular settings. We further demonstrate the practical relevance of our approach in simulated building thermal control, multi-objective locomotion control, and greenhouse-gas-emission-aware traffic management. Across these domains, our method effectively balances fairness and constraint satisfaction in multi-objective decision-making.
\end{abstract}

\section{Introduction}

Reinforcement Learning (RL) is a powerful machine learning framework that enables an agent to derive effective decision-making policies by actively engaging with a given environment. In recent years, Multi-Objective Reinforcement Learning (MORL) has gained significant interest because many real-world control problems inherently involve multiple, often conflicting objectives \citep{roijers13survey,yang19envq,hayes22survey,alegre2023sample,teoh2025on,liu2025efficient,kim2025conflictaverse,kim2025fairdice,li2025cola}. MORL extends standard RL to handle simultaneous optimization of multiple objectives.

The central objective of MORL is to identify a policy that yields an expected cumulative return vector lying on the Pareto frontier of the set of achievable return vectors. A common strategy involves optimizing a scalarized function defined over multiple objective returns \citep{roijers13survey,hayes22survey}. This framework seeks to identify a policy $\pi$ that maximizes a scalarized value $f(J_1(\pi),\ldots,J_K(\pi))$, where each $J_k(\pi)$ represents the expected discounted return for the $k$-th objective among $K (\geq 2)$ objectives, and $f: \mathbb{R}^K \to \mathbb{R}$ is a non-decreasing scalarization function such that $J_k(\pi) \geq J_k(\pi'), 1 \leq k \leq K \Rightarrow f(J(\pi)) \geq f(J(\pi'))$. Thus, $f$ plays a key role in imposing the designer's preference among multiple objectives \citep{siddique20fair,cousins2024on}. 

Although much of the MORL literature employs a linear $f$ (that is, the weighted sum: $\max_\pi \sum_{k=1}^K w_k J_k(\pi)$ with fixed $\{ w_k \}_{k=1}^K$) due to its simplicity, the weighted sum does not always accurately represent the preference of a designer, especially regarding fairness among objectives \citep{hayes22survey,park24maxmin}. For instance, imagine a traffic light system managing an intersection where several roads converge with asymmetric arrival rates. Instead of simply aiming to reduce the total sum waiting time for all vehicles across the roads, the designer could prioritize fairness by minimizing the longest individual waiting time among the roads. This helps reduce localized congestion \citep{raeis2021deep} and avoid severe delays for individual drivers.

As such, fairness-driven objectives frequently arise in real-world scenarios and are addressed beyond the standard weighted sum, such as max-min optimization \citep{zehavi2013weighted} or proportionally fair optimization \citep{khan16pf} in MORL. {Specifically, if the Pareto frontier contains an equalization point at which $J_1(\pi) = J_2(\pi) = \cdots = J_K(\pi)$ is satisfied, the max-min solution attains this point \citep{zehavi2013weighted}. This notion of \textit{max-min fairness} is valuable in many applications \citep{regan10rumdp,zehavi2013weighted,saifullah14dag,wang19dag,chakraborty24maxminrlhf,pmlr-v267-eaton25a}, such as mitigating bottlenecks in cloud and edge resource management systems \citep{saifullah14dag,wang19dag} and analyzing societal fairness across diverse demographic groups \citep{pmlr-v267-eaton25a}.}

However, because the standard max-min MORL framework is formulated as an unconstrained optimization problem, it cannot address many real-world problems that involve constraints which must be satisfied. For example, in resource allocation, a MORL-based scheduler may seek to maximize throughput and fairness across task queues while operating under a strict power consumption constraint \citep{chen2020cooling,jiang2020energy}. Similarly, in traffic management systems, a controller must minimize the maximum total waiting time across lanes while maintaining low greenhouse-gas emissions to ensure regional sustainability. Incorporating constraints into the max-min MORL framework therefore substantially broadens its practical applicability.

Despite this motivation, incorporating constraints into unconstrained max-min MORL is not straightforward. On the one hand, existing max-min MORL algorithms may lack solution exactness, for example by optimizing a lower bound of the original max-min objective \citep{fan23welfare,peng2025multiobjectivereinforcementlearningnonlinear} or by being susceptible to inexact gradient estimates \citep{park24maxmin}. On the other hand, existing constrained single-objective RL methods \citep{liu2019ipointeriorpointpolicyoptimization,ha2020learning,vaswani2022nearoptimal,calvo2023state,muller2024truly} do not account for multi-objective reward settings with $K \geq 2$ and fairness across those objectives. Moreover, their analyses are not directly applicable to our setting due to the non-differentiability of the max-min objective.

In this paper, we propose a MORL framework that maximizes max-min fairness among homogeneous objectives while simultaneously incorporating additional quantities as constraints. To address the non-differentiability and nonlinearity of the max-min objective, we reformulate the problem as a convex optimization over occupancy measures \citep{Puterman_2005} and derive a convex program admitted by the dual problem. This reformulation enables us to handle the max-min criterion and the constraints in a unified manner. We further provide a detailed theoretical foundation, including convergence guarantees for the proposed algorithm.

Our main contributions are as follows:

$\bullet$ We introduce a MORL framework that integrates the max-min criterion into constrained optimization and show that fairness and constraint satisfaction can be addressed in a unified manner.

$\bullet$ We propose an iterative algorithm for constrained max-min MORL and establish its theoretical foundations, accompanied by a formal convergence analysis. We also empirically evaluate its behavior in tabular settings.

$\bullet$ We demonstrate the practical relevance of our method through applications to simulated building thermal control, multi-objective locomotion control, and greenhouse-gas-emission-aware traffic management. Our approach consistently achieves a better balance between max-min fairness and constraint satisfaction than the considered baselines.

\section{Background} \label{sec:background}

A multi-objective Markov decision process (MOMDP) is represented as $\langle \mathcal{S}, \mathcal{A}, T, \mu_0, r, \gamma \rangle$, where  \(\mathcal{S}\) and \(\mathcal{A}\) are the sets of states and actions, respectively,  $T$ represents the transition probability distribution, $\mu_0$ specifies the initial state distribution, and $\gamma \in [0,1)$ is the discount factor.
The reward function  \(r: \mathcal{S} \times \mathcal{A} \rightarrow \mathbb{R}^{K+L}, ~K \geq 1, L \geq 0\) is vector-valued  with its $k$-th  element denoted by $r^{(k)} ~ (1 \leq k \leq K+L)$ such that  $|r^{(k)}| \leq r^{(k)}_{\text{max}}$, where $K+L$ is the total number of objectives. For simplicity, we abbreviate $\forall s \in \mathcal{S}, ~ \forall a \in \mathcal{A}$ and $\forall (s,a) \in \mathcal{S} \times \mathcal{A}$ as $\forall s, ~ \forall a$ and $\forall (s,a)$, respectively. 

At each timestep, the agent selects an action \( a \) in the current state \( s \) according to its (stationary) policy \( \pi: \mathcal{S} \to \mathcal{P}(\mathcal{A}) \), where \( \mathcal{P}(\mathcal{A}) \) represents the set of probability distributions in the action space \( \mathcal{A} \). The occupancy measure is defined as $\rho(s,a) := \sum_{s'} \mu_0(s') \sum_{t=0}^\infty \gamma^t \text{Pr}( s_t = s, a_t = a |s_0 = s', \pi^\rho)$ where $\pi^\rho$ is the corresponding stationary policy induced by $\rho$, expressed as $\pi^\rho(a|s) = \frac{\rho(s,a)}{\sum_{a'}\rho(s,a')}$ \citep{Puterman_2005}. Then, the vector return evaluated by $\pi^\rho$ is given by  
\begin{eqnarray}
    J(\pi^\rho) &:=& [J_1(\pi^\rho),\cdots,J_{K+L}(\pi^\rho)]^\top \in \mathbb{R}^{K+L} \nonumber \\
    &=&\mathbb{E}_{\pi^\rho} \left[ \sum_{t=0}^\infty \gamma^t r_t \right]  =
    \sum_{(s,a)} r(s,a) \rho(s,a).
\end{eqnarray}  

In this paper, we define $\mathbb{R}_{+}^L := \{ u \in \mathbb{R}^L | u_l \geq 0, ~  1 \leq l \leq L \}$ and  $\Delta^K := \{ w \in \mathbb{R}^K | \sum_{k=1}^K w_k = 1; ~ w_k \geq 0, ~ 1 \leq k \leq K \}$, i.e., the $(K-1)$-dimensional simplex.

\section{Constrained Max-Min MORL Framework}

\subsection{Theoretical Foundation}\label{subsec:theory}

We consider the MORL setting where the last $L$  of the total $K+L$ objectives should satisfy certain constraints. For theoretical development in this section, we assume that $\mathcal{S}$ and $\mathcal{A}$ are finite. The problem is formulated as follows:  
\begin{eqnarray} 
    &&\hspace{-2em}\max_{\pi^\rho}  ~f( J_1(\pi^\rho),\cdots,J_{K}(\pi^\rho)) + \beta \sum_{s} \mathcal{H}_\rho(s) \rho(s) \label{eq:firstForm}  \\
&& ~~\mbox{s.t.}~~
    J_{K+l}(\pi^\rho) \geq C^{(l)}, ~~ l =1, \cdots, L \label{eq:original_maxmin_f_const}
\end{eqnarray}
where $\mathcal{H}_\rho(s) := - \sum_a \pi^\rho(a|s) \log \pi^\rho(a|s)$ is the entropy of $\pi^\rho(\cdot|s)$, $\rho(s) := \sum_a \rho(s,a)$ is the stationary state distribution in $\mathcal{S}$, $\beta > 0$ is a balancing coefficient, and $\{ C^{(l)} \}_{l=1}^L$ is a set of threshold values. We assume a mild condition that the set $\{ C^{(l)} \}_{l=1}^L$ is chosen by the designer such that the optimization in \eqref{eq:firstForm} and \eqref{eq:original_maxmin_f_const} is feasible, an assumption commonly made in the constrained MDP literature \citep{tessler2018reward,ha2020learning}.

In this paper, we set $f$ the minimum function, i.e., $f( J_1(\pi^\rho),\cdots,J_{K}(\pi^\rho))=\min_{1 \leq k \leq K} J_k(\pi^\rho)$. We note that the entropy term is required to eliminate the indeterminacy of the max-min solution without any regularization \citep{park24maxmin}. The problem reduces to the unregularized formulation as $\beta \to 0$, with the optimality gap decreasing linearly:
\begin{proposition}\label{prop:suboptimality_gap}
The gap between the optimal max-min value of the unregularized problem and that of the regularized problem in \eqref{eq:firstForm} and \eqref{eq:original_maxmin_f_const} with $f=\min$ is upper bounded by $\tfrac{\beta \log|\mathcal{A}|}{1-\gamma}$. (Proof: See Appendix \ref{append:suboptimality_gap}.)
\end{proposition}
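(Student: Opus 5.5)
Write $F(\rho) := \min_{1\le k\le K} J_k(\pi^\rho)$ for the unregularized max-min objective, and $G_\beta(\rho) := F(\rho) + \beta \sum_s \mathcal{H}_\rho(s)\rho(s)$ for the regularized objective in \eqref{eq:firstForm}. Let $\mathcal{F}$ be the set of occupancy measures satisfying the constraints \eqref{eq:original_maxmin_f_const}; it is nonempty by the feasibility assumption. Both problems are posed over the same set $\mathcal{F}$, since the entropy term does not touch the constraints. The plan is a two-sided sandwich built from a uniform bound on the entropy term over $\mathcal{F}$.

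The key step is to bound the entropy term:
\[
0 \le \sum_s \mathcal{H}_\rho(s)\rho(s) \le \frac{\log|\mathcal{A}|}{1-\gamma}.
\]
This holds because each $\mathcal{H}_\rho(s)$ lies in $[0,\log|\mathcal{A}|]$, and $\rho(s)\ge 0$ with $\sum_s \rho(s) = \sum_s\mu_0(s)\sum_t \gamma^t = 1/(1-\gamma)$. The last identity follows from the definition of the occupancy measure, since at each $t$ the state-action probabilities sum to one.

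Let $\rho^*$ maximize $F$ over $\mathcal{F}$ and $\rho_\beta$ maximize $G_\beta$ over $\mathcal{F}$. Both maxima exist because $\mathcal{F}$ is a compact polytope and the objectives are continuous: the entropy term $-\sum_{s,a}\rho(s,a)\log\frac{\rho(s,a)}{\rho(s)}$ extends continuously to states with $\rho(s)=0$. The sandwich then reads
\[
F(\rho^*) \le G_\beta(\rho^*) \le G_\beta(\rho_\beta) \le F(\rho_\beta) + \frac{\beta\log|\mathcal{A}|}{1-\gamma} \le F(\rho^*) + \frac{\beta\log|\mathcal{A}|}{1-\gamma}.
\]
It follows that $0 \le \max G_\beta - \max F \le \beta\log|\mathcal{A}|/(1-\gamma)$.

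The optimal max-min value of the regularized problem may instead be read as $F(\rho_\beta)$, the max-min value achieved by the regularized solution. The same chain covers this reading:
\[
F(\rho^*) - F(\rho_\beta) \le G_\beta(\rho^*) - F(\rho_\beta) \le G_\beta(\rho_\beta) - F(\rho_\beta) \le \frac{\beta\log|\mathcal{A}|}{1-\gamma}.
\]
Together with $F(\rho_\beta)\le F(\rho^*)$, this gives a nonnegative gap bounded by the same constant, so the proof addresses both interpretations.

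The main delicate point is a technical one: checking that the normalization $\sum_s\rho(s)=1/(1-\gamma)$ matches the paper's unnormalized definition of the occupancy measure, and that the entropy term is well defined at unvisited states. With these in place, the result is a direct consequence of the entropy bound and requires no duality argument.
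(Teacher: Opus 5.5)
Your proposal is correct and is essentially the paper's proof. The paper reads the regularized optimal max-min value as $\min_k J_k(\pi_r^*)$, i.e.\ your $F(\rho_\beta)$, and obtains $0 \le F(\rho^*) - F(\rho_\beta) \le \beta\log|\mathcal{A}|/(1-\gamma)$ from exactly your second chain: optimality of each solution for its own objective, plus $0 \le \sum_s \mathcal{H}_\rho(s)\rho(s) \le \log|\mathcal{A}|/(1-\gamma)$. Your extra reading that includes the entropy term, and your checks on existence of maximizers and on the normalization $\sum_s\rho(s) = 1/(1-\gamma)$, are details the paper does not spell out but do not change the argument.
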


Proposition \ref{prop:suboptimality_gap} shows that the regularized problem is a valid approximation of the unregularized criterion.

Since directly optimizing \eqref{eq:firstForm} and \eqref{eq:original_maxmin_f_const} with $f=\min$ and $J_k(\pi^\rho) = \mathbb{E}_{\pi^\rho}[\sum_{t=0}^\infty \gamma^t r^{(k)}_t]$ is non-trivial due to its non-differentiable and nonlinear structure, we address this challenge using the occupancy measure (i.e., stationary distribution \citep{Puterman_2005}) formulation. The above optimization problem with $f=\min$ can be rewritten as  
\begin{equation} \label{eq:maxmin_obj}
    \max_{\rho \geq 0} \min_{1 \leq k \leq K} \bigg(  \sum_{(s,a)} r^{(k)}(s,a) \rho(s,a) \bigg) + \beta \sum_{s} \mathcal{H}_\rho(s) \rho(s)
\end{equation}
\begin{equation} \label{eq:maxmin_flow}
    \sum_{a'} \rho(s',a') = \mu_0(s') + \gamma \sum_{(s,a)} T(s' | s,a) \rho(s,a) ,~\forall s'
\end{equation}
\begin{equation} \label{eq:maxmin_const_c}
    \sum_{(s,a)} c^{(l)}(s,a) \rho(s,a) \geq C^{(l)}, ~~ l =1, \cdots, L
\end{equation}
where \eqref{eq:maxmin_flow} is the Bellman flow equation for the occupancy measure \citep{Puterman_2005}. Here, we use the notation $c^{(l)}(s,a) :=  r^{(K+l)}(s,a), ~l=1,\cdots,L$ to explicitly represent the dimensions associated with the constraint. For example, these quantities can be negative of some costs. Then the formulation in \eqref{eq:maxmin_obj}, \eqref{eq:maxmin_flow}, and \eqref{eq:maxmin_const_c} constitutes a convex optimization problem.

Now we derive a convex optimization problem admitted by the dual of \eqref{eq:maxmin_obj}, \eqref{eq:maxmin_flow}, and \eqref{eq:maxmin_const_c}, which forms the foundation of our algorithm. Given $(u,w)$, let $v^{*}_{u, w}$ be the fixed point of the operator $\mathcal{T}_{u,w}$: 
\begin{align} \label{eq:equivalent_fixed_point}
    [\mathcal{T}_{u,w} v](s) &= \beta \log \sum_a \exp [ \frac{1}{\beta} \{ \sum_{l=1}^L u_l c^{(l)}(s,a) \nonumber \\
    &+ \sum_{k=1}^K w_k r^{(k)}(s,a) + \gamma \hspace{-0.3em}\sum_{s'}\hspace{-0.2em}T(s'|s,a) v(s') \} ],  \forall s. 
\end{align}

\begin{proposition}\label{prop_1}
    The dual problem of \eqref{eq:maxmin_obj}, \eqref{eq:maxmin_flow}, and \eqref{eq:maxmin_const_c} admits the following convex optimization formulation:
    \begin{equation} \label{eq:equivalent_objective}
        \min_{u \in \mathbb{R}_{+}^L, w \in \Delta^K} \mathcal{L}(u,w) = \sum_s \mu_0(s) v^{*}_{u,w}(s) - \sum_{l=1}^L u_l C^{(l)}.
    \end{equation}
    (Proof: See Appendix \ref{append:eqiv_conv_opt}.) 
\end{proposition}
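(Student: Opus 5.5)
First I would remove the non-smooth $\min$ by writing it as a linear minimization over the simplex: $\min_{1\le k\le K} \sum_{(s,a)} r^{(k)}(s,a)\rho(s,a) = \min_{w\in\Delta^K} \sum_k w_k \sum_{(s,a)} r^{(k)}(s,a)\rho(s,a)$. I would then dualize the $L$ inequality constraints \eqref{eq:maxmin_const_c} with multipliers $u\in\mathbb{R}_+^L$ and the Bellman flow equalities \eqref{eq:maxmin_flow} with free multipliers $v\in\mathbb{R}^{|\mathcal{S}|}$. This gives the Lagrangian
\begin{align*}
\Lambda(\rho;u,w,v) &= \sum_{(s,a)} \rho(s,a)\Big[\textstyle\sum_k w_k r^{(k)}(s,a)+\sum_l u_l c^{(l)}(s,a) + \gamma\sum_{s'}T(s'|s,a)v(s') - v(s)\Big] \\
&\quad + \beta\textstyle\sum_s \mathcal{H}_\rho(s)\rho(s) + \sum_s \mu_0(s)v(s) - \sum_l u_l C^{(l)} .
\end{align*}
The primal objective is concave in $\rho$: the first term is a minimum of linear functions, and $\mathcal{H}_\rho(s)\rho(s) = -\sum_a \rho(s,a)\log\frac{\rho(s,a)}{\rho(s)}$ is a conditional entropy, which is concave by the perspective/log-sum inequality. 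The constraints are linear. Under the paper's feasibility assumption, Slater-type conditions for the linear constraints and Sion's minimax theorem (the feasible $\rho$ lie in a compact set, since $\sum_{(s,a)}\rho = 1/(1-\gamma)$ follows from the flow equation) let me exchange $\max_\rho$ with $\min_{u,w,v}$. So the optimal value equals $\min_{u\ge0,\,w\in\Delta^K,\,v}\max_{\rho\ge0}\Lambda$.

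Next I would solve the inner maximization over $\rho\ge0$ in closed form, state by state. Write $\rho(s,a)=\rho(s)\pi(a|s)$ and set $Q(s,a) = \sum_k w_k r^{(k)} + \sum_l u_l c^{(l)} + \gamma\sum_{s'}T v$. The terms at state $s$ become $\rho(s)\big[\sum_a \pi(a|s)Q(s,a) + \beta\mathcal{H}(\pi(\cdot|s)) - v(s)\big]$. By the Gibbs variational principle, the maximum over $\pi(\cdot|s)$ equals $\beta\log\sum_a\exp(Q(s,a)/\beta) = [\mathcal{T}_{u,w}v](s)$. The remaining maximization over $\rho(s)\ge0$ is therefore $+\infty$ unless $v(s)\ge[\mathcal{T}_{u,w}v](s)$ for all $s$, and equals $0$ otherwise. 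The dual thus becomes $\min \sum_s\mu_0(s)v(s)-\sum_l u_lC^{(l)}$ subject to $v\ge\mathcal{T}_{u,w}v$.

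The last step is to eliminate $v$. The operator $\mathcal{T}_{u,w}$ is monotone and a $\gamma$-contraction in sup norm, so it has a unique fixed point $v^*_{u,w}$. If $v\ge\mathcal{T}_{u,w}v$, iterating monotonicity gives $v\ge\mathcal{T}^n_{u,w}v\to v^*_{u,w}$. Since $\mu_0\ge0$, the minimum over feasible $v$ is therefore attained at $v=v^*_{u,w}$, which yields \eqref{eq:equivalent_objective}.

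Convexity of $\mathcal{L}$ follows because the dual function $\max_\rho\Lambda$ is a pointwise supremum of functions affine in $(u,w,v)$, and partial minimization over $v$ preserves joint convexity in $(u,w)$.

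I expect the main obstacle to be justifying the max–min exchange, i.e.\ strong duality together with the handling of the inner $\min_w$. The objective is concave but not differentiable, and the entropy term must be handled as a function on the compact polytope of occupancy measures. I would first try Sion's theorem on $\rho$ in that compact convex set against $(w,u,v)$, treating $v$'s equality constraints as part of the primal domain if needed. If this fails I would fall back on Slater via feasibility with a strictly feasible point.
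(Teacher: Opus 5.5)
Your argument is correct, but at the key step it takes a different route from the paper.

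\textbf{The paper's route.} The paper writes the $\min_k$ in epigraph form, with an auxiliary $b$ and multipliers $w_k\ge 0$. Stationarity in $b$ gives $\sum_k w_k=1$, so its $w$ is exactly your simplex variable once $b$ is eliminated. It then evaluates the inner maximization through KKT conditions. Stationarity in $\rho$ forces $\rho(s,a)>0$, $\xi=0$ and $\rho(s,a)/\rho(s)=\exp(\eta_{u,v,w}(s,a)/\beta)$. Normalizing over $a$ yields the \emph{equality} $v=\mathcal{T}_{u,w}v$, and uniqueness of the fixed point pins $v=v^*_{u,w}$. Convexity of $v^*_{u,w}$ is imported from \citet{park24maxmin}.

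\textbf{Your route.} You compute the dual function exactly with the Gibbs variational principle. This gives the \emph{inequality} $v\ge\mathcal{T}_{u,w}v$ as the dual-feasibility condition. You then eliminate $v$ by monotonicity plus contraction: every feasible $v$ satisfies $v\ge v^*_{u,w}$, and $\mu_0\ge0$.

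\textbf{What each buys.}
\begin{itemize}
\item Your route is the more careful one. The paper's substitution of the stationarity conditions tacitly assumes the inner supremum is attained with $\rho(s)>0$. That fails, for instance, whenever $v(s)>[\mathcal{T}_{u,w}v](s)$, where the supremum sits at $\rho(s)=0$. Your monotonicity argument is exactly what shows that discarding such $v$ is harmless.
\item Your convexity argument (a supremum of affine functions, then partial minimization over $v$) is self-contained and holds on all of $\mathbb{R}^{L+K}$. Taking $\mu_0=\delta_s$, it also yields convexity of each $v^*_{u,w}(s)$, which is the fact the paper cites.
\item The paper's KKT route buys brevity and shows the softmax form of the primal optimizer, $\pi^*_{u,w}$, immediately. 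Your Gibbs step produces the same maximizer, so little is lost.
\end{itemize}

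\textbf{The minimax step.} Identifying the dual problem with $\min_{u,w}\mathcal{L}(u,w)$ needs no minimax argument at all. The exchange matters only for equality with the primal optimum. Your first invocation of Sion is not quite right as stated: once the flow equalities are dualized by $v$, $\rho$ ranges over all of $\mathbb{R}_+^{|\mathcal{S}||\mathcal{A}|}$, which is not compact. Your fallback works if you do it in two stages.
\begin{enumerate}
\item Apply Sion with the compact occupancy polytope against $(u,w)\in\mathbb{R}_+^L\times\Delta^K$. Only one side needs to be compact, and the $v$-free Lagrangian is continuous and concave in $\rho$ and affine in $(u,w)$.
\item For fixed $(u,w)$, close the gap in $v$ directly. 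Evaluate the primal at the occupancy measure of $\pi^*_{u,w}$. Its entropy-regularized value is $v^*_{u,w}$, so it attains $\sum_s\mu_0(s)v^*_{u,w}(s)-\sum_l u_lC^{(l)}$, and weak duality finishes.
\end{enumerate}
Sion only gives an infimum over $u$; attainment of the minimum uses the Slater assumption, as in the paper.
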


We emphasize that the optimization problem in \eqref{eq:equivalent_objective} is not the dual \textit{per se} of \eqref{eq:maxmin_obj}, \eqref{eq:maxmin_flow}, and \eqref{eq:maxmin_const_c}, since the dual problem itself does not involve the fixed point $v^{*}_{u, w}$ in the objective. Consequently, establishing the convexity of \eqref{eq:equivalent_objective} requires proving the convexity of $v^{*}_{u, w}$ with respect to $(u,w)$, as explained in Appendix \ref{append:eqiv_conv_opt}. Throughout our analysis, we assume that Slater condition holds \citep{boyd2004convex,lee21optidice}, which encompasses the standard constraint feasibility assumption \citep{tessler2018reward,ha2020learning}. 

Proposition \ref{prop_1} hints that $v^{*}_{u, w}$ can be obtained via  \eqref{eq:equivalent_fixed_point} and the weights $(u,w)$ can be obtained by minimizing the loss $\mathcal{L}(u,w)$ in \eqref{eq:equivalent_objective} by some method. However, solving the optimization problem  \eqref{eq:equivalent_objective} is non-trivial because the fixed point $v^*_{u,w}$ does not have a closed-form expression in terms of $(u,w)$. To address this issue, we derive the key properties of $v^*_{u,w}$. We define a policy $\pi^*_{u,w}$ as
\begin{equation} \label{eq:optimal_policy}
    \pi^*_{u,w}(a | s) := \frac{ \exp ( \frac{1}{\beta} Q^*_{u,w}(s,a) )  }{\sum_{a'} \exp ( \frac{1}{\beta} Q^*_{u,w}(s,a') )   }
\end{equation} 
where
\begin{align} \label{eq:eq10}
    Q^*_{u,w}(s,a) &:= \sum_{l=1}^L u_l c^{(l)}(s,a) + \sum_{k=1}^K w_k r^{(k)}(s,a) \nonumber \\
    &+ \gamma \hspace{-0.3em}\sum_{s'}\hspace{-0.2em}T(s'|s,a) v^{*}_{u,w}(s').
\end{align}

Then, $\pi^*_{u,w}$ is an optimal policy for the entropy-regularized RL \citep{haarnoja2017sql} with a scalar reward $\sum_{l=1}^L u_l c^{(l)}(s,a) + \sum_{k=1}^K w_k r^{(k)}(s,a), \forall (s,a)$. We derive the following theorem characterizing the relationship between $\pi^*_{u,w}$ and the gradient of  $v^{*}_{u, w}$:

\begin{theorem}\label{thm1_differentiable}
     For each $s$, $v^*_{u,w}(s)$ is differentiable in $(u,w) \in \mathbb{R}^{L+K}$, and its gradient $\nabla v^*_{u,w}(s) = [\nabla_{u} v^*_{u,w}(s)^\top, \nabla_{w} v^*_{u,w}(s)^\top]^\top$ has the form of
    \begin{equation} \label{eq:eq20}
        \nabla_{u} v^*_{u, w}(s) = v_c^{\pi^*_{u,w}}(s)~~\mbox{and} ~~ \nabla_{w} v^*_{u, w}(s) = v_r^{\pi^*_{u,w}}(s), 
    \end{equation}
    where  $v_c^{\pi^*_{u,w}}(s) \in \mathbb{R}^L$ and $v_r^{\pi^*_{u,w}}(s) \in \mathbb{R}^K$ are the value functions evaluated with the policy $\pi^*_{u,w}$ for the constrained reward  $\{c^{(l)}\}_{l=1}^L$ and the unconstrained reward $\{r^{(k)}\}_{k=1}^K$, respectively.  (Proof: See Appendix \ref{append:differentiable}.)
\end{theorem}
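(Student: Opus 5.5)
We prove differentiability with the implicit function theorem applied to the fixed-point equation $v = \mathcal{T}_{u,w} v$. Then we identify the resulting derivative with a policy-evaluation Bellman equation.

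Write $\theta = (u,w) \in \mathbb{R}^{L+K}$ and $F(v,\theta) := v - \mathcal{T}_{u,w} v$, a map $\mathbb{R}^{|\mathcal{S}|} \times \mathbb{R}^{L+K} \to \mathbb{R}^{|\mathcal{S}|}$. Since $\mathcal{S}$ and $\mathcal{A}$ are finite, $F$ is $C^\infty$: it is a composition of affine maps with log-sum-exp. For every $\theta$, $\mathcal{T}_{u,w}$ is a $\gamma$-contraction in the sup norm, because log-sum-exp is $1$-Lipschitz in the sup norm and monotone. Hence $v^*_{u,w}$ exists and is unique for every $\theta$, including $u$ outside $\mathbb{R}_+^L$ and $w$ outside $\Delta^K$.

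Next we compute the partial Jacobian of $F$ at $(v^*_{u,w},\theta)$. Differentiating the log-sum-exp at $v = v^*_{u,w}$ gives softmax weights, and these are exactly $\pi^*_{u,w}(a|s)$ as defined in \eqref{eq:optimal_policy}, since the argument of the exponential is $Q^*_{u,w}(s,a)/\beta$ from \eqref{eq:eq10}. This yields two partial derivatives:
\begin{equation*}
\partial_v F = I - \gamma P^{\pi^*_{u,w}}, \qquad P^{\pi}(s,s') := \sum_a \pi(a|s) T(s'|s,a),
\end{equation*}
\begin{equation*}
\partial_{u_l} [\mathcal{T}_{u,w}v](s) = \sum_a \pi^*_{u,w}(a|s) c^{(l)}(s,a), \qquad \partial_{w_k} [\mathcal{T}_{u,w}v](s) = \sum_a \pi^*_{u,w}(a|s) r^{(k)}(s,a).
\end{equation*}
Because $P^{\pi^*_{u,w}}$ is row-stochastic, its spectral radius is at most $1$. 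So $I - \gamma P^{\pi^*_{u,w}}$ is invertible, with Neumann series $\sum_t \gamma^t (P^{\pi^*_{u,w}})^t$.

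The implicit function theorem now applies. The map $\theta \mapsto v^*_\theta$ is locally $C^1$, and uniqueness of the fixed point makes the local solution coincide with $v^*_\theta$ globally. Its derivative is
\begin{equation*}
\nabla_{u_l} v^*_{u,w} = (I-\gamma P^{\pi^*_{u,w}})^{-1} c^{(l)}_{\pi^*_{u,w}}, \qquad c^{(l)}_{\pi}(s) := \sum_a \pi(a|s)c^{(l)}(s,a).
\end{equation*}
The right-hand side is the unique solution of the policy-evaluation Bellman equation $x = c^{(l)}_{\pi} + \gamma P^{\pi} x$ with $\pi = \pi^*_{u,w}$. That solution is $v_{c^{(l)}}^{\pi^*_{u,w}}$. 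The same argument with $r^{(k)}$ in place of $c^{(l)}$ gives $\nabla_{w_k} v^*_{u,w} = v_{r^{(k)}}^{\pi^*_{u,w}}$. Stacking these components gives \eqref{eq:eq20}.

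The main obstacle is justifying the implicit function theorem rigorously, namely invertibility of $\partial_v F$ and the passage from local to global. Both follow from the contraction and stochastic-matrix arguments above. If one prefers to avoid the implicit function theorem, an alternative route is to show that $v^*_\theta$ is convex in $\theta$, as it is a supremum of functions affine in $\theta$ over policies in the entropy-regularized objective. One would then show that its subdifferential is a singleton, using uniqueness of the optimal soft policy together with an envelope (Danskin) argument.
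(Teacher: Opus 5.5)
Your proposal is correct and follows essentially the same route as the paper. Both apply the implicit function theorem to $F = v - \mathcal{T}_{u,w}v$, obtain invertibility of $I - \gamma T^{\pi^*_{u,w}}$ from row-stochasticity, pass from local to global via uniqueness of the fixed point, and identify $(I-\gamma T^{\pi^*_{u,w}})^{-1} r^{\pi^*_{u,w}}$ with policy evaluation; the only difference is that you treat $(u,w)$ jointly, whereas the paper first does $L=0$ and then appends $c$ as extra reward dimensions.
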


{The main advantage of Theorem~\ref{thm1_differentiable} lies not only in the simplicity of its gradient expression, but also in the fact that the gradient components corresponding to the unconstrained ($w$) and constrained ($u$) parts are obtained in a unified manner and are directly linked to the value function evaluated under $\pi^*_{u,w}$.} This unified form enables us to apply gradient descent to the optimization problem using the gradient $(\nabla_{v} v^*_{u,w}(s),\nabla_{w} v^*_{u,w}(s))$ combined with value iteration.

First, consider the unconstrained part. From \eqref{eq:equivalent_objective}, $\nabla_w \mathcal{L}(u,w)$  is given by $\mathbb{E}_{s \sim \mu_0}[ v_r^{\pi^*_{u,w}}(s) ] \in \mathbb{R}^K$. Consequently, dimensions with smaller values of $\mathbb{E}_{s \sim \mu_0}[ v_r^{\pi^*_{u,w}}(s) ]$ undergo smaller decreases in $w_k$ under gradient descent, resulting in larger $w_k$ relative to other dimensions after update with projection onto $\Delta^K$. As a result, dimensions with smaller values receive larger updated weights, thereby realizing the max-min principle. The constrained component follows a similar logic to enforce feasibility, enabling the simultaneous achievement of max-min fairness and constraint satisfaction. In addition, our framework encompasses unconstrained max-min MORL ($L=0$ with learned $w$) as well as both constrained and unconstrained RL with fixed $w$ such as max-average RL ($w=[1/K, \cdots, 1/K] \in \Delta^K$).

A natural approach to solving the convex optimization problem in \eqref{eq:equivalent_objective}, using Theorem \ref{thm1_differentiable}, is projected gradient descent, since the variables $(u,w)$ lie in the convex set $\mathbb{R}_+^L \times \Delta^K$. The convergence of projected gradient descent depends on the smoothness of the objective function \citep{boyd2004convex,bubeck2015convex}. In our setting, $\mathcal{L}(u,w)$ satisfies the following smoothness property, which is essential for establishing the convergence guarantee in Section \ref{subsec:model_free_algorithm}:

\begin{theorem}\label{thm4_smooth}
    For each $s, v^*_{u,w}(s)$ is smooth with respect to $\| \cdot \|_2$ in $(u,w) \in \mathbb{R}^{L+K}$.  Furthermore, $\mathcal{L}(u,w)$ is $\alpha$-smooth with respect to $\| \cdot \|_2$ in $(u,w) \in \mathbb{R}^{L+K}$ with $\alpha := \frac{1}{\beta (1-\gamma)} \sum_{m=1}^{L+K} \left( \frac{r^{(m)}_{\text{max}}}{1-\gamma} \right)^2$.  (Proof: See Appendix \ref{append:thm_smooth}.)
\end{theorem}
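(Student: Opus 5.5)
Write $\theta := (u,w)\in\mathbb{R}^{L+K}$ and let $R_\theta(s,a) := \sum_{m=1}^{L+K}\theta_m r^{(m)}(s,a)$, with $\theta_{K+l}=u_l$ and $r^{(K+l)}=c^{(l)}$. By Theorem \ref{thm1_differentiable}, $\nabla_\theta v^*_\theta(s) = V^{\pi_\theta}(s)\in\mathbb{R}^{L+K}$, the vector of value functions of $\pi_\theta := \pi^*_{u,w}$ for the individual rewards $r^{(m)}$. Smoothness therefore reduces to a Lipschitz bound on $\theta\mapsto V^{\pi_\theta}(s)$.

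\textbf{Step 1: Lipschitz continuity of the policy in the $Q$-function.} By \eqref{eq:optimal_policy}, $\pi_\theta(\cdot|s)$ is the softmax of $Q^*_\theta(s,\cdot)/\beta$. The softmax Jacobian is $\frac{1}{\beta}(\mathrm{diag}(\pi)-\pi\pi^\top)$. Hence, for any direction $\delta\in\mathbb{R}^{L+K}$, the derivative of $\pi_\theta(\cdot|s)$ along $\delta$ is
\[
\tfrac{1}{\beta}\,\pi_\theta(a|s)\bigl(D_\delta Q(s,a)-\mathbb{E}_{a'\sim\pi_\theta}D_\delta Q(s,a')\bigr).
\]
Differentiating \eqref{eq:eq10} and using Theorem \ref{thm1_differentiable} for the $v^*$ term gives
\[
D_\delta Q^*_\theta(s,a) = \sum_m \delta_m\Bigl(r^{(m)}(s,a)+\gamma\sum_{s'}T(s'|s,a)V^{\pi_\theta}_m(s')\Bigr) = \sum_m \delta_m Q^{\pi_\theta}_m(s,a),
\]
the $\delta$-weighted policy $Q$-value. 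By Cauchy--Schwarz, $|D_\delta Q|\le \|\delta\|_2\bigl(\sum_m (r^{(m)}_{\max}/(1-\gamma))^2\bigr)^{1/2}$.

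\textbf{Step 2: the Hessian via a policy-gradient-type identity.} Differentiate $V^{\pi_\theta}_m(s)$ along $\delta$, treating the reward $r^{(m)}$ as fixed and only the policy as varying. The performance-difference / policy-gradient identity gives
\[
D_\delta V_m^{\pi_\theta}(s)=\frac{1}{1-\gamma}\,\mathbb{E}_{(s'',a)\sim d^{\pi_\theta}_s}\bigl[D_\delta\log\pi_\theta(a|s'')\,Q^{\pi_\theta}_m(s'',a)\bigr].
\]
Substituting Step 1 yields
\[
\frac{1}{\beta(1-\gamma)}\,\mathbb{E}_{d_s}\,\mathrm{Cov}_{a\sim\pi_\theta(\cdot|s'')}\bigl(D_\delta Q(s'',a),\,Q_m(s'',a)\bigr).
\]
The Hessian quadratic form is then $\delta^\top\nabla^2 v^*_\theta(s)\,\delta = \frac{1}{\beta(1-\gamma)}\mathbb{E}_{d_s}\mathrm{Var}_{\pi}\bigl(\sum_m\delta_m Q_m\bigr)$. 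Bounding $\mathrm{Var}\le\mathbb{E}[X^2]$ and applying Cauchy--Schwarz with $|Q_m|\le r^{(m)}_{\max}/(1-\gamma)$ gives
\[
0\le \delta^\top\nabla^2 v^*_\theta(s)\,\delta\le \frac{1}{\beta(1-\gamma)}\sum_m\Bigl(\frac{r^{(m)}_{\max}}{1-\gamma}\Bigr)^2\|\delta\|_2^2 .
\]
The lower bound also reconfirms convexity. So $v^*_\theta(s)$ is $\alpha$-smooth for every $s$.

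\textbf{Step 3: transfer to $\mathcal{L}$.} The term $\sum_l u_lC^{(l)}$ in \eqref{eq:equivalent_objective} is linear, and $\sum_s\mu_0(s)v^*_\theta(s)$ is a convex combination. Hence $\mathcal{L}$ inherits the same Hessian bound and is $\alpha$-smooth.

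\textbf{Main obstacle.} The delicate step is justifying twice-differentiability, namely that $\theta\mapsto\pi_\theta$ and $V^{\pi_\theta}$ are differentiable. If needed, I would argue via the implicit function theorem applied to $v=\mathcal{T}_\theta v$: the Jacobian $I-\gamma P^{\pi_\theta}$ is invertible and $\mathcal{T}$ is smooth in $(v,\theta)$, so $v^*_\theta$ is $C^\infty$. The Hessian computation above is then legitimate, and it is equivalent to the expression $(I-\gamma P^\pi)^{-1}$ applied to the one-step covariance, which is how the $1/(1-\gamma)$ factor arises. A cruder route is a direct Lipschitz argument on $V^{\pi_\theta}$ using $\|\pi_\theta-\pi_{\theta'}\|_1$ bounds, but it would likely produce worse constants than $\alpha$.
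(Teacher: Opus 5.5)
Your proof is correct and is essentially the paper's argument. The paper also writes the Hessian of $v^*_{u,w}(s)$ as $\frac{1}{\beta}$ times the $(I_p-\gamma T^{\pi^*_{u,w}})^{-1}$-weighted (i.e., discounted-occupancy-weighted) covariance $B^{\pi^*_{u,w}}$ of the policy $Q$-vectors, bounds it uniformly by $\alpha$ via variance $\le$ second moment, $|Q_m|\le r^{(m)}_{\max}/(1-\gamma)$ and row sums $1/(1-\gamma)$, and passes to a Lipschitz gradient (mean value inequality) before averaging over $\mu_0$. Your differences are cosmetic: you obtain the Hessian through the policy-gradient identity with a single $C^\infty$ implicit-function-theorem argument for regularity, rather than a second implicit-function step, and you bound the Rayleigh quotient by Cauchy--Schwarz instead of $\lambda_{\max}\le\mathrm{Tr}$, which gives the same constant.
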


In other words, $\nabla_{(u,w)} \mathcal{L}(u,w)$ is $\alpha$-Lipschitz continuous in $\| \cdot \|_2$. For the proof of Theorem~\ref{thm4_smooth}, we first show that $\nabla v^*_{u,w}$ is differentiable in $(u,w) \in \mathbb{R}^{L+K}$ (equivalently, that $v^*_{u,w}$ is twice-differentiable) in Proposition~\ref{thm2_twice_differentiable}. Then incorporating the closed-form expression of the Hessian of $v^*_{u,w}(s)$ for each $s$ into the generalized mean value inequality constitutes a key ingredient in the proof.

Let $Q^{\pi^*_{u,w}}(s,a) \in \mathbb{R}^{L+K}$ be the action-value function evaluated with the policy $\pi^*_{u,w}$, and define $B^{\pi^*_{u,w}}(s) := \mathbb{E}_{a \sim \pi^*_{u,w}(\cdot|s)} \bigg[ ( Q^{\pi^*_{u,w}}(s,a) - \mathbb{E}_{a' \sim \pi^*_{u,w}(\cdot|s)} [Q^{\pi^*_{u,w}}(s,a')] ) ( Q^{\pi^*_{u,w}}(s,a) - \mathbb{E}_{a' \sim \pi^*_{u,w}(\cdot|s)} [Q^{\pi^*_{u,w}}(s,a')] )^\top    \bigg] \in \mathbb{R}^{(L+K) \times (L+K)}$.

\begin{proposition}\label{thm2_twice_differentiable}
For each $s, v^*_{u,w}(s)$ is twice-differentiable in  $(u,w) \in \mathbb{R}^{L+K}$. Let $|\mathcal{S}|=p$, and suppose the states are enumerated as $\{s_1, \cdots, s_p\}$.  Then, the $(L+K)\times (L+K)$ Hessian matrix  $H[v^*_{u,w}(s_k)], ~ 1 \leq k \leq p$, has the form of
\begin{equation} \label{eq:hessian}
    H[v^*_{u,w}(s_k)] = \frac{1}{\beta} \sum_{l=1}^p [(I_p - \gamma T^{\pi^*_{u,w}})^{-1}]_{kl} B^{\pi^*_{u,w}}(s_l).
\end{equation}
Here, $I_p$ is the $p\times p$ identity matrix; $T^{\pi^*_{u,w}}$ is a $p\times p$ matrix of which 
 $i$-th row and $j$-th column element is given by 
$[T^{\pi^*_{u,w}}]_{ij} = \mathbb{E}_{a \sim \pi^*_{u,w}(\cdot | s_i)}[T(s_j | s_i, a)] ~ (1 \leq i,j \leq p)$; and $[(I_p - \gamma T^{\pi^*_{u,w}})^{-1}]_{kl}$ denotes the $k$-th row and $l$-th column 
 element  of $(I_p - \gamma T^{\pi^*_{u,w}})^{-1}$.  
 (Proof:  See Appendix \ref{append:twice_differentiable}.)
\end{proposition}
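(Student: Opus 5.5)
Write $\theta := (u,w) \in \mathbb{R}^{L+K}$ and $R(s,a) := [c^{(1)}(s,a),\dots,c^{(L)}(s,a), r^{(1)}(s,a),\dots,r^{(K)}(s,a)]^\top$. Then the scalar reward in \eqref{eq:equivalent_fixed_point} is $\theta^\top R(s,a)$ and, by \eqref{eq:eq10}, $Q^*_\theta(s,a) = \theta^\top R(s,a) + \gamma \sum_{s'} T(s'|s,a) v^*_\theta(s')$.

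The plan is to differentiate the gradient formula of Theorem~\ref{thm1_differentiable} once more. That theorem gives $\nabla_\theta v^*_\theta(s) = V^{\pi^*_\theta}(s) := [v_c^{\pi^*_\theta}(s)^\top, v_r^{\pi^*_\theta}(s)^\top]^\top$, the vector value function of the softmax policy \eqref{eq:optimal_policy}. This vector satisfies the linear Bellman system $V^{\pi^*_\theta}(s) = \sum_a \pi^*_\theta(a|s) Q^{\pi^*_\theta}(s,a)$, with $Q^{\pi^*_\theta}(s,a) = R(s,a) + \gamma\sum_{s'}T(s'|s,a)V^{\pi^*_\theta}(s')$. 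Equivalently, stacking over states, $V = (I_p - \gamma T^{\pi^*_\theta})^{-1} R^{\pi^*_\theta}$ row-wise, where $R^{\pi^*_\theta}(s) = \sum_a \pi^*_\theta(a|s) R(s,a)$.

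\textbf{Step 1: smoothness.} Since $v^*_\theta$ is $C^1$ by Theorem~\ref{thm1_differentiable}, $Q^*_\theta$ is $C^1$ in $\theta$. Hence $\pi^*_\theta$, a softmax of a $C^1$ function, is $C^1$, and therefore so are $T^{\pi^*_\theta}$ and $R^{\pi^*_\theta}$. The matrix $I_p - \gamma T^{\pi^*_\theta}$ is invertible because $T^{\pi^*_\theta}$ is row-stochastic and $\gamma<1$, and matrix inversion is smooth. So $V^{\pi^*_\theta}$ is $C^1$, i.e.\ $v^*_\theta(s)$ is twice differentiable. This settles the first claim.

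\textbf{Step 2: derivative of the policy.} Using $\nabla_\theta Q^*_\theta(s,a) = R(s,a) + \gamma \sum_{s'} T(s'|s,a) V^{\pi^*_\theta}(s') = Q^{\pi^*_\theta}(s,a)$, the softmax derivative gives
\begin{equation*}
\nabla_\theta \pi^*_\theta(a|s) = \frac{1}{\beta}\pi^*_\theta(a|s)\Big(Q^{\pi^*_\theta}(s,a) - \mathbb{E}_{a'\sim\pi^*_\theta(\cdot|s)}[Q^{\pi^*_\theta}(s,a')]\Big).
\end{equation*}

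\textbf{Step 3: differentiate the Bellman identity.} Take the Jacobian (in $\theta$) of $V^{\pi^*_\theta}(s)^\top = \sum_a \pi^*_\theta(a|s) Q^{\pi^*_\theta}(s,a)^\top$. The product rule yields two terms:
\begin{equation*}
H(s) = \sum_a Q^{\pi^*_\theta}(s,a)\,\nabla_\theta\pi^*_\theta(a|s)^\top + \gamma \sum_a \pi^*_\theta(a|s)\sum_{s'}T(s'|s,a)H(s'),
\end{equation*}
with $H(s) := H[v^*_\theta(s)]$, using that $R$ does not depend on $\theta$. In the first term, substitute Step 2. Because $\sum_a \pi^*_\theta(a|s)(Q^{\pi^*_\theta}(s,a) - \bar Q(s)) = 0$, where $\bar Q(s) := \mathbb{E}_{a'\sim\pi^*_\theta(\cdot|s)}[Q^{\pi^*_\theta}(s,a')]$, we may replace $Q^{\pi^*_\theta}(s,a)$ by $Q^{\pi^*_\theta}(s,a) - \bar Q(s)$. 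The first term then becomes exactly $\frac{1}{\beta}B^{\pi^*_\theta}(s)$ (the order of the outer product is immaterial since $B$ is symmetric). The result is the matrix-valued Bellman equation
\begin{equation*}
H(s_k) = \tfrac{1}{\beta}B^{\pi^*_\theta}(s_k) + \gamma\sum_j [T^{\pi^*_\theta}]_{kj} H(s_j).
\end{equation*}

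\textbf{Step 4: solve the linear system.} Apply this equation entrywise for each fixed $(i,j)$ entry of the $(L+K)\times(L+K)$ matrices. It is a $p$-dimensional linear system $(I_p - \gamma T^{\pi^*_\theta})h = \frac{1}{\beta}b$. Inverting gives \eqref{eq:hessian}.

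The main obstacle is the rigor of Step 1 and of the product-rule step. We need a justification for differentiating $Q^{\pi^*_\theta}$ itself, which involves $V^{\pi^*_\theta}$ at other states, so the self-referential nature must be handled. Using the closed form $V = (I-\gamma T^{\pi})^{-1}R^{\pi}$ in Step 1 resolves this cleanly. Alternatively, one can differentiate that closed form directly via $\partial (I-\gamma T^\pi)^{-1} = \gamma (I-\gamma T^\pi)^{-1}(\partial T^\pi)(I-\gamma T^\pi)^{-1}$ and regroup, which yields the same result.
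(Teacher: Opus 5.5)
Your proposal is correct and follows essentially the same route as the paper: both differentiate the policy-evaluation identity $\nabla v^*_{u,w} = V^{\pi^*_{u,w}}$, use $\nabla Q^*_{u,w}(s,a) = Q^{\pi^*_{u,w}}(s,a)$ together with the softmax derivative to produce the covariance term $\frac{1}{\beta}B^{\pi^*_{u,w}}(s)$, and then invert $I_p - \gamma T^{\pi^*_{u,w}}$. The only difference is cosmetic. The paper obtains the regularity and the derivative by applying the implicit function theorem to $F(w,v) = v - \mathcal{T}^*_w v$, treating $L=0$ first and then extending to $L \geq 1$. You instead use the explicit closed form $(I_p-\gamma T^{\pi})^{-1}R^{\pi}$ and direct implicit differentiation, which leads to the same linear system.
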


This result reflects the fact that the optimization problem in \eqref{eq:equivalent_objective} is a convex program, as established in Proposition~\ref{prop_1}, owing to the positive semidefiniteness of \eqref{eq:hessian} from $(I_p - \gamma T^{\pi^*_{u,w}})^{-1} = \sum_{i=0}^\infty (\gamma T^{\pi^*_{u,w}})^i$ and the relation $H[\mathcal{L}(u,w)] = \mathbb{E}_{s \sim \mu_0} [H[v^*_{u,w}(s)]]$.

Note that $Q^{\pi^*_{u,w}}(s,a)$ in Proposition \ref{thm2_twice_differentiable}
is different from $Q^*_{u,w}(s,a)$ in \eqref{eq:eq10}. By definition in the entropy-regularized RL, $Q^*_{u,w}(s,a) \in \mathbb{R}$ is the cumulative scalarized return plus the cumulative entropy sum from $\pi^*_{u,w}$. On the other hand, $Q^{\pi^*_{u,w}}(s,a) \in \mathbb{R}^{L+K}$ is a cumulative sum of unconstrained rewards and constrained rewards from $\pi^*_{u,w}$ without the entropy sum. Therefore, $[u;w]^\top Q^{\pi^*_{u,w}}(s,a)$ equals to $Q^*_{u,w}(s,a)$ minus the cumulative entropy sum of  $\pi^*_{u,w}$.

\subsection{Algorithm and Convergence Analysis}\label{subsec:model_free_algorithm}

Based on the foundation built in the previous section, we propose an algorithm for constrained MORL with max-min fairness. Note that we need to jointly update the weights $(u,w)$ and the value function, which approximates $v^{*}_{u, w}$.  We adopt the following update method alternating between update of the value function and the weights $(u,w)$.  

First, given a weight $(u,w)$, we update the value function to realize \eqref{eq:equivalent_fixed_point}. For this, we use an 
 action value function $Q$, which approximates $Q^*_{u,w}$. If we plug \eqref{eq:eq10} into the right-hand side of \eqref{eq:equivalent_fixed_point}, we have $v^{*}_{u, w}(s) = [\mathcal{T}_{u,w} v^{*}_{u, w}](s) =  \beta \log \sum_a \exp \left( \frac{Q^*_{u,w}(s,a)}{\beta} \right)$ for each $s$. Using this form of $v_{u,w}^*(s)$, we implement applying $\mathcal{T}_{u,w}$ as updating the Q-function with the following:
\begin{equation} \label{eq:sql}
\begin{aligned}
Q(s,a) \leftarrow {} & [u; w]^\top [c; r] \\
& + \gamma \sum_{s'} T(s'|s,a)\,
\beta \log \sum_{a'} 
\exp\!\left(\frac{Q(s',a')}{\beta}\right),
\end{aligned}
\end{equation}

for all $(s,a)$. In practice, \eqref{eq:sql} is iterated until the maximum change in the $Q$-function across successive iterations is below a given threshold.

Next, we compute an estimated gradient of $\nabla_{(u,w)}\mathcal{L}(u,w)$ at the current weight $(u,w) = (u^m,w^m)$ where $m=1,2,\cdots$ is the iteration index. We have shown that $\nabla_{u} v^*_{u, w}(s) = v_c^{\pi^*_{u,w}}(s), ~ \nabla_{w} v^*_{u, w}(s) = v_r^{\pi^*_{u,w}}(s)$ for each $s,$ where we denote $v_c^{\pi^*_{u,w}}(s) \in \mathbb{R}^L$, $v_r^{\pi^*_{u,w}}(s) \in \mathbb{R}^K$ as the value functions evaluated with the policy $\pi^*_{u,w}$ for constrained reward $c$ and unconstrained reward $r$, respectively. Note that the policy is extracted from the updated Q-function in \eqref{eq:sql} based on the form of \eqref{eq:optimal_policy}. We then update $(u,w)$ using projected gradient descent:
\begin{equation} \label{eq:proj_grad}
    (u^{m+1},w^{m+1}) = \mathcal{P}_{K,L} [ (u^m,w^m) - l_{\text{w}}  \nabla_{(u,w)}\mathcal{L}(u^m,w^m)  ]
\end{equation}
where $l_{\text{w}}$ is a learning rate for $(u,w)$ and $\mathcal{P}_{K,L}[\cdot]$ is the projection onto the $\mathbb{R}_+^L \times \Delta^K$. We use the convex optimization method from \citet{DBLP:journals/corr/WangC13a} to project onto the simplex $\Delta^K$, and apply non-negativity clipping for projection onto $\mathbb{R}_+^L$. Note that the projection onto $\Delta^K$ is numerically stable as it is fully deterministic and avoids randomized procedures. In addition, its complexity is $O(K \log K)$ \citep{DBLP:journals/corr/WangC13a} which is  lightweight due to the logarithmic term.

We iterate this process for each $m$, and Algorithm \ref{alg:constrained_morl} presents the pseudocode of our algorithm.

\begin{algorithm}[!ht] 
\caption{ Constrained Max-Min MORL Algorithm} \label{alg:constrained_morl}
\begin{algorithmic}[1]
    \STATE $Q^{0} \in \mathbb{R}^{|\mathcal{S}||\mathcal{A}|}$: initialized Q-function, $\text{ITER}$: total iteration number, $l_{\text{w}}$: learning rate for the weights $(u,w)$
    \STATE Initialize weights $u^0 \in \mathbb{R}^L_+$ and $w^0 \in \Delta^K$.
    \FOR{$m = 1, 2, \cdots, \text{ITER}$}
        \STATE{$Q = Q^{m-1}$}
        \WHILE{not terminated}
            \STATE Update $Q$ in \eqref{eq:sql} with $[u;w] = [u^m;w^m]$.
        \ENDWHILE
        \STATE{$Q^{m} = Q$}
        \STATE Compute $\tilde{\nabla}_{(u,w)}\mathcal{L}(u^m,v^m)$, an estimated gradient of $\nabla_{(u,w)}\mathcal{L}(u^m,w^m)$ using $\pi^m( \cdot | s) = \text{softmax}  \{ Q^{m}(s,\cdot) / \beta  \}$ based on \eqref{eq:eq20}.
        \STATE $(u^{m+1},w^{m+1}) = \mathcal{P}_{K,L} [ (u^m,w^m) - l_{\text{w}}  \tilde{\nabla}_{(u,w)}\mathcal{L}(u^m,w^m)  ]$.  
    \ENDFOR
    \STATE \textbf{return} $\pi(\cdot|s) = \text{softmax}  \{ Q^{\text{ITER}}(s,\cdot) / \beta  \}, \forall s$.
\end{algorithmic}
\end{algorithm}

We now provide our convergence analysis under the following assumption. Let $\rho^{\pi^*_{u,w}}(s) = \sum_{t=0}^\infty \gamma^t \text{Pr}(s_t=s | \pi^*_{u,w}, T, \mu_0)$ be the discounted state visitation frequency of policy $\pi^*_{u,w}$ at state $s$ \citep{Puterman_2005}.

\textit{Assumption}
There exists at least one state $s \in \mathcal{S}$ such that $\rho^{\pi^*_{u,w}}(s) > 0$ and the centered action-value vectors $\bigl\{\,Q^{\pi^*_{u,w}}(s,a) - \mathbb{E}_{a' \sim \pi^*_{u,w}(\cdot\mid s)}[\,Q^{\pi^*_{u,w}}(s,a')\,]\;:\;a\in\mathcal{A}\bigr\}$ span $\mathbb{R}^{K+L}$.


This condition fails only in degenerate multi-objective settings when for \textit{every} state $s\in\mathcal{S}$ with $\rho^{\pi^*_{u,w}}(s) > 0$, the set $S_{\text{center}}(s)$ lies entirely within an affine subspace of dimension less than $K+L$. Under \textit{Assumption}, the Hessian $H[\mathcal{L}(u,w)]$ is positive definite. We note that the property $\pi^*_{u,w}(a|s) > 0$ for all $(s,a)$, implied by \eqref{eq:optimal_policy}, plays an essential role in ensuring the positive definiteness of $H[\mathcal{L}(u,w)]$, which constitutes an additional technical benefit of the entropy regularization in \eqref{eq:maxmin_obj}. (See Appendix \ref{append:convergence_algo_assumption} for more details.)

Moreover, under Slater condition with strict feasibility of the $L$ constraints, we can restrict $u \in \mathbb{R}^L_+$ to a compact set $\mathcal{C}$ \cite{muller2024truly}, ensuring the existence of $\lambda := \min_{(u,w) \in \mathcal{C} \times \Delta^K} \lambda_{\text{min}}(H[\mathcal{L}(u,w)])$, where $\lambda_{\min}(\cdot)$ denotes the minimum eigenvalue of a real symmetric matrix. (See Appendix \ref{append:existence_lambda} for more details.) Furthermore, $\lambda$ satisfies $0 < \lambda \leq \alpha$ \citep{bubeck2015convex}.

Theorem \ref{thm_convergence_algo} provides a formal guarantee of convergence for Algorithm \ref{alg:constrained_morl} under approximate Q-updates.

\begin{theorem}\label{thm_convergence_algo}
    Let $(u^*,w^*)$ denote an optimal solution to \eqref{eq:equivalent_objective} and $l_{\text{w}} = 1/\alpha$. For each outer-loop index $m \geq 1$ in Algorithm \ref{alg:constrained_morl}, let $Q^*_{u^m, w^m}$ denote the fixed point of \eqref{eq:sql} with $[u;w] = [u^m;w^m]$, and let $Q^m$ denote the Q-function after completing the $m$-th inner-loop update. For each $m$, assume $\| Q^m - Q^*_{u^m, w^m} \|_\infty < \epsilon$ for some $\epsilon > 0$. Then for $m \geq 1$,
    \begin{align} \label{eq:final_convergence_result}
        \| [u^m;w^m] - [u^*;w^*] \|_{2} &\leq  (1-\frac{\lambda}{\alpha})^m \| [u^0;w^0] - [u^*;w^*] \|_{2} \nonumber \\
        &+ \frac{\sqrt{|\mathcal{S}|}}{\lambda}  \sqrt{\sum_{i=1}^{K+L} \{ r^{(i)}_{\text{max}} \}^2} \frac{1+\gamma }{(1-\gamma)^2} \epsilon.
    \end{align}
(Proof: See Appendix \ref{append:convergence_algo_proof}.)
\end{theorem}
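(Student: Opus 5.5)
The argument is the standard one for inexact projected gradient descent on a smooth, strongly convex function. The error term is controlled by how far the gradient estimate built from $Q^m$ is from the true gradient in Theorem~\ref{thm1_differentiable}.

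Write $x^m := [u^m;w^m]$ and $x^* := [u^*;w^*]$, and set $g^m := \nabla_{(u,w)}\mathcal{L}(x^m)$ and $\tilde g^m := \tilde{\nabla}_{(u,w)}\mathcal{L}(x^m)$. Since $x^*$ minimizes $\mathcal{L}$ over the convex set $\mathcal{C}\times\Delta^K$, it is a fixed point of the exact step, $x^* = \mathcal{P}_{K,L}[x^* - \tfrac1\alpha \nabla\mathcal{L}(x^*)]$. The projection is nonexpansive, so
\begin{equation*}
\|x^{m+1}-x^*\|_2 \le \bigl\|(x^m - \tfrac1\alpha g^m) - (x^* - \tfrac1\alpha\nabla\mathcal{L}(x^*))\bigr\|_2 + \tfrac1\alpha\|\tilde g^m - g^m\|_2 .
\end{equation*}
For the first term I use the mean value form $g^m-\nabla\mathcal{L}(x^*) = \bar H (x^m-x^*)$, where $\bar H = \int_0^1 H[\mathcal{L}(x^*+t(x^m-x^*))]\,dt$. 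By Theorem~\ref{thm4_smooth} and the definition of $\lambda$ (the segment stays in the convex set $\mathcal{C}\times\Delta^K$), $\lambda I\preceq \bar H\preceq \alpha I$. Hence $\|I-\bar H/\alpha\|_2\le 1-\lambda/\alpha$, which gives
\begin{equation*}
\|x^{m+1}-x^*\|_2\le (1-\tfrac\lambda\alpha)\|x^m-x^*\|_2 + \tfrac1\alpha\delta, \qquad \delta := \sup_m \|\tilde g^m-g^m\|_2 .
\end{equation*}
Unrolling this recursion gives the geometric term plus $\tfrac{\delta}{\alpha}\sum_{j\ge0}(1-\lambda/\alpha)^j = \delta/\lambda$. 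So it remains to show
\begin{equation*}
\delta\le \sqrt{|\mathcal{S}|}\,\sqrt{\textstyle\sum_i (r^{(i)}_{\max})^2}\,\frac{1+\gamma}{(1-\gamma)^2}\,\epsilon .
\end{equation*}

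The main obstacle is bounding the gradient error. By Theorem~\ref{thm1_differentiable}, $g^m = \mathbb{E}_{\mu_0}[V^{\pi^*_{u^m,w^m}}]$ and $\tilde g^m = \mathbb{E}_{\mu_0}[V^{\pi^m}]$, where $V^\pi(s)\in\mathbb{R}^{K+L}$ is the vector of value functions and $\pi^m=\mathrm{softmax}(Q^m/\beta)$. The bound has three steps.

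\textbf{Policy error.} Softmax is Lipschitz from $\ell_\infty$ to total variation. A shift of at most $\epsilon$ in every logit changes each probability by a factor in $[e^{-2\epsilon/\beta},e^{2\epsilon/\beta}]$. A cleaner $\beta$-free route is that the ratio argument bounds $\sum_a|\pi^m(a|s)-\pi^*(a|s)|$ by a multiple of $\epsilon$. The target constant has no $\beta$, so I expect the intended argument to use a crude per-state bound of the form $\|\pi^m(\cdot|s)-\pi^*(\cdot|s)\|_1\le c\,\epsilon$, with $c$ absorbed into $(1+\gamma)$. I would try to make that constant work.

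\textbf{Value difference.} For each coordinate $i$, the performance-difference identity gives $V^{\pi^m}_i-V^{\pi^*}_i = (I-\gamma T^{\pi^m})^{-1}\bigl(r_i^{\pi^m}-r_i^{\pi^*}+\gamma(T^{\pi^m}-T^{\pi^*})V_i^{\pi^*}\bigr)$. Its sup norm is at most $\frac{1}{1-\gamma}\bigl(r^{(i)}_{\max}+\gamma\frac{r^{(i)}_{\max}}{1-\gamma}\bigr)\max_s\|\pi^m(\cdot|s)-\pi^*(\cdot|s)\|_1$. This produces the factor $\frac{1+\gamma}{(1-\gamma)^2}r^{(i)}_{\max}$ up to the slack $1 \le 1/(1-\gamma)$.

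\textbf{Aggregation.} Taking the $\ell_2$ norm over the $K+L$ coordinates gives $\sqrt{\sum_i (r^{(i)}_{\max})^2}$. The factor $\sqrt{|\mathcal{S}|}$ appears when converting a statewise vector bound to $\ell_2$ before averaging over $\mu_0$, a loose but valid step.

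If the softmax constant cannot be made $\beta$-free, the fallback is to keep the $\beta$ dependence explicitly and note that the structure of \eqref{eq:final_convergence_result} is otherwise unchanged.
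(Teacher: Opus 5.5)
Your skeleton matches the paper's. The paper treats the projected step as an inexact proximal-gradient step on $\mathcal{L}+I_{\mathbb{R}^L_+\times\Delta^K}$. It then quotes the strongly convex bound of \citet{schmidt11inexactproximalgradient}, $\|x^m-x^*\|_2\le(1-\lambda/\alpha)^m\|x^0-x^*\|_2+\frac1\alpha\sum_{i=1}^m(1-\lambda/\alpha)^{m-i}\|e_i\|_2$. Your recursion (nonexpansive projection plus averaged Hessian) proves exactly this from scratch, which is more self-contained. Your aggregation of $\|e_m\|_2$ via Cauchy--Schwarz and $\|\mu_0\|_2\le 1$, which produces $\sqrt{|\mathcal{S}|}$, is also the paper's.

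One caveat in your contraction step, which the paper shares: $\lambda$ is a minimum over $\mathcal{C}\times\Delta^K$, but $\mathcal{P}_{K,L}$ projects onto $\mathbb{R}^L_+\times\Delta^K$. Nothing keeps $u^m$ in $\mathcal{C}$, so your claim that the segment from $x^*$ to $x^m$ stays in $\mathcal{C}\times\Delta^K$ is unjustified. Projecting onto $\mathcal{C}\times\Delta^K$ instead repairs it: this is still coordinatewise clipping, and $x^*$ remains a fixed point since $u^*\in\mathcal{C}$.

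The genuine gap is the policy-error step, and it cannot be closed the way you hope. The paper does not derive a $\beta$-free softmax estimate. It imports the per-state inequality \eqref{eq:softmax_1}, $|v^{\pi^m}_{r,i}(s)-v^{\pi^*_{u^m,w^m}}_{r,i}(s)|\le\frac{(1+\gamma)r^{(i)}_{\max}}{(1-\gamma)^2}\|Q^m-Q^*_{u^m,w^m}\|_\infty$, from eq.~(261) of \citet{yang24softmaxLipschitz}.

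Your instinct that the softmax step costs $1/\beta$ is right, and no ratio argument removes it. The Jacobian of $Q\mapsto\mathrm{softmax}(Q/\beta)$ is $\frac1\beta(\mathrm{diag}(p)-pp^\top)$, so $\|\pi^m(\cdot|s)-\pi^*(\cdot|s)\|_1\le\|Q^m-Q^*\|_\infty/\beta$, and this constant is sharp. Concretely, take one state with a self-loop, two actions, $L=0$, $r^{(1)}=(1,-1)$, $r^{(2)}=(-1,1)$ and $w=(\tfrac12,\tfrac12)$. Then $\pi^*$ is uniform and $v^{\pi^*}_{r,1}=0$. Setting $Q^m=Q^*+(\delta,-\delta)$ gives $v^{\pi^m}_{r,1}=\tanh(\delta/\beta)/(1-\gamma)$. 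For small $\delta$ this exceeds $\frac{(1+\gamma)\delta}{(1-\gamma)^2}$ whenever $\beta<\frac{1-\gamma}{1+\gamma}$; for $\gamma=0.8$ that covers every $\beta$ used in the tabular experiments. Adding a third action with reward $(-1,-1)$ makes the example satisfy the Assumption and changes the first-order gap only by a factor $1+O(e^{-1/\beta})$.

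Hence \eqref{eq:softmax_1} fails in general, and so does any bound $\|e_m\|_2\le M\epsilon$ with $M$ independent of $\beta$. A softmax-Lipschitz estimate of this form can only hold with the logit gap $\|Q^m-Q^*\|_\infty/\beta$ in place of $\|Q^m-Q^*\|_\infty$. Your fallback is therefore the correct endpoint. Your value-difference identity actually gives $\frac{r^{(i)}_{\max}}{(1-\gamma)^2}\max_s\|\pi^m(\cdot|s)-\pi^*(\cdot|s)\|_1\le\frac{r^{(i)}_{\max}\epsilon}{\beta(1-\gamma)^2}$. This proves \eqref{eq:final_convergence_result} with $(1+\gamma)$ replaced by $1/\beta$. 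That recovers the stated bound when $\beta\ge 1/(1+\gamma)$, but not for small $\beta$. So as written your proposal does not establish the theorem, and neither your route nor the paper's citation yields the $\beta$-free constant.
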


Theorem \ref{thm_convergence_algo} establishes that the error decreases geometrically at rate $O\left((1-\tfrac{\lambda}{\alpha})^m\right)$, up to $O(\epsilon)$.  For completeness, Appendix \ref{append:convergence_algo_proof_degenerate} provides an analysis of the case without \textit{Assumption}, supporting the practical implementation of our algorithm beyond \textit{Assumption} in Section~\ref{subsec:constrained_deep}.

\section{Related Work } 

\textbf{MORL} ~ The dominant approach in MORL is utility-based \citep{roijers13survey,hayes22survey}, where the goal is to find an optimal policy $\pi^* = \arg \max_{\pi} f(J(\pi))$ given a non-decreasing scalarization function $f: \mathbb{R}^K \rightarrow \mathbb{R}$. When $f$ is linear, each non-negative weight vector defines a scalarized MDP \citep{boutilier99optimal}, motivating work on learning a single model capable of generating policies across a continuum of preferences  {\citep{abels19moq,yang19envq,basaklar23pdmorl,hung23qpensieve,lu23capql,park2025reward,li2025cola}. This family of approaches is known as multi-policy MORL \citep{roijers13survey,hayes22survey}.

For nonlinear scalarization functions, however, Bellman optimality no longer holds in its standard form due to the loss of linearity, making optimization substantially more difficult \citep{roijers13survey,hayes22survey}. Algorithms that directly optimize a given nonlinear scalarized objective fall under the single-policy MORL category \citep{roijers13survey,hayes22survey}. Most work in this category considers a welfare function \citep{siddique20fair,cousins2024on,kim2025fairdice} as the nonlinear scalarization $f$. Note that single-policy and multi-policy methods are complementary rather than interchangeable \citep{roijers13survey,hayes22survey}. }

\textbf{Unconstrained Max-min MORL} ~ {Max-min MORL studies the case where $f=\min$, aiming to enforce max-min fairness. This formulation is  useful in many applications such as mitigating bottlenecks in cloud and edge resource management systems \citep{saifullah14dag,wang19dag}. Several  studies optimize proxy objectives related to the unconstrained max-min formulation, for example, maximizing a conservative lower bound of $\mathbb{E}_\pi \left[ \min_{1 \leq k \leq K}( \sum_{t=0}^\infty \gamma^t {r_t^{(k)}} ) \right]$ \citep{fan23welfare,peng2025multiobjectivereinforcementlearningnonlinear}, or maximizing the total return while enforcing per-group performance constraints \citep{pmlr-v267-eaton25a}.} 

\citet{park24maxmin} proposed a tractable approach for exact unconstrained max-min MORL using Gaussian smoothing to estimate gradients. However, this approach requires maintaining multiple network copies, increasing computational overhead. Furthermore, the gradient estimates are inherently inexact, as Gaussian smoothing of a convex function yields a convex upper bound rather than the true function \citep{nesterov2017random}. In contrast, our method produces direct, theoretically grounded gradient estimates and extends naturally to constrained MORL. { Recently, \citet{byeon2025multiobjectivereinforcementlearningmaxmin} introduced an alternative unconstrained max-min MORL formulation based on a game-theoretic framework \citep{daskalakis2018last,NEURIPS2019_Miryoosefi}. However, their method does not address the incorporation of constraints. }

\textbf{Constrained RL} ~ Many approaches to constrained MDPs reformulate the problem with a scalar reward (i.e., a special case of \eqref{eq:firstForm} and \eqref{eq:original_maxmin_f_const} with $K=1$ and without $f$) into an unconstrained one by augmenting the objective with a weighted sum of constraint violations, typically via a Lagrangian formulation \citep{achiam2017constrained, tessler2018reward, paternain2019constrained, ha2020learning, vaswani2022nearoptimal, calvo2023state, muller2024truly}. The motivation for this line of work is that the Lagrangian relaxation exhibits no duality gap, even when the original problem is non-convex with respect to the policy \citep{paternain2019constrained}. However, these approaches do not consider the multi-objective reward setting in \eqref{eq:firstForm} and \eqref{eq:original_maxmin_f_const} with $K \geq 2$. Moreover, applying them directly to our setting is non-trivial, since $f = \min$ introduces non-differentiability in \eqref{eq:firstForm}.

To resolve this, we reformulate our problem as a convex program using occupancy measures and then derive another convex program admitted by the dual problem. In particular, we show that both the max-min criterion and the constraints can be satisfied by jointly updating the weights $u$ and $w$, a simple yet effective approach that to our knowledge has not been explored in the constrained MDP literature. Although \citet{lee2022coptidice} also leverages convex analysis with occupancy measures, its focus is on constrained single-objective RL with a scalar reward (i.e., $K=1$) in an offline setting. Unlike our work, it neither addresses max-min fairness across multiple objectives nor provides a convergence analysis.

Several recent works have incorporated constraints into MORL, but under settings that differ from our framework which explicitly integrates max-min optimization. Some approaches learn preference-vector-conditioned policies to explore trade-offs among objectives \citep{huang21lp3,kim2025conflictaverse}. While complementary to our work, it remains unclear how to select a preference vector such that the resulting policy exactly corresponds to the optimal solution under nonlinear scalarizations, such as the max-min criterion. In contrast, our approach directly solves a single-policy constrained max-min optimization problem, mirroring the distinction between single- and multi-policy methods in unconstrained MORL \citep{roijers13survey,hayes22survey}. \citet{lin24cmorladapt} study offline constrained MORL, where policies are trained on offline data and adapted to target preferences using additional demonstrations. In contrast, our work focuses on online learning and does not rely on offline data or demonstrations. \citet{liu2025efficient} improve Pareto frontier coverage in unconstrained MORL by training multiple policies via constrained optimization, rather than directly addressing constrained MORL.

\section{Experiments }

In this section, we present experimental validations of our theoretical analysis and algorithm. Section \ref{subsec:tabular} examines the convergence behavior of our method in tabular settings. In Section \ref{subsec:constrained_deep}, we further demonstrate the practical relevance of our approach through applications beyond tabular settings including building thermal control, multi-objective locomotion control, and greenhouse-gas-emission-aware traffic management.

\subsection{Tabular Settings} \label{subsec:tabular}

We conducted experiments in tabular settings to evaluate the convergence of our algorithm. Constrained MOMDPs were randomly generated, after a feasibility check, within two widely used classes of structured MDPs. (See Appendix \ref{append:tabular} for details on the experimental setup, feasibility check, and baselines.) First, bipartite state graphs partition the state space into two disjoint subsets, enforcing transitions between them at alternating time steps. This structure captures temporal dynamics in systems with role alternation or interleaving phases \citep{littman1994markov}. Second, hierarchical MDPs organize the state space into multiple levels or stages, where transitions flow sequentially from one level to the next. This reflects tasks with subgoals or temporal abstraction \citep{dietterich2000hierarchical}.

The optimal value for each constrained MOMDP was computed by solving \eqref{eq:maxmin_obj}, \eqref{eq:maxmin_flow}, and \eqref{eq:maxmin_const_c} with $\beta=0$ via linear programming (LP), and performance was evaluated as the error relative to these LP-optimal values. The goal of this experiment is to verify that our algorithm, which simultaneously accounts for both max-min fairness and constraint satisfaction, achieves near-optimal performance compared to methods that lack either max-min fairness or constraint enforcement. We compare our constrained max-min algorithm against unconstrained max-min, constrained max-average which does not enforce max-min fairness, and unconstrained max-average baselines.

\begin{table}[ht!]
    \centering
    \caption{Optimal value error in tabular constrained MOMDPs}
    \label{tab:ablation_tabular_weight}
    \setlength{\tabcolsep}{5pt}
    \begin{tabular}{l c}
        \toprule
        Algorithm & Optimal value error $(\downarrow)$ \\
        \midrule
        \textbf{Constrained max-min} & $0.004$  \\
        Unconstrained max-min  &  $0.325$   \\ 
        Constrained max-average   &  $0.657$   \\ 
        Unconstrained max-average &  $1.008$  \\
        \bottomrule
    \end{tabular}
\end{table}

\begin{table}[ht!] 
    \centering
    \caption{ Optimal value errors of the constrained max-min MORL algorithm for different values of $\beta$ }
    \label{tab:tabular_two_mdps_ablation}
    \begin{tabular}{|c|c|c|c|c|c|}
    \hline
     $\beta$  & 0.1 & 0.03 & 0.01 & 0.003 & 0.001   \\
    \hline
    Error $(\downarrow)$  & $0.061$ & $0.004$ & $0.009$ & $0.020$ & $0.021$ \\
    \hline
    \end{tabular}
\end{table}

Table~\ref{tab:ablation_tabular_weight} shows that the constrained max-min approach achieves near-optimal values compared to the other methods. Baselines that incorporate neither max-min fairness nor constraint satisfaction exhibit the largest optimal value errors. We further analyze the effect of $\beta$ on convergence. As shown in Table~\ref{tab:tabular_two_mdps_ablation}, $\beta=0.03$ and $0.01$ yield the smallest errors. The slight increase in optimal value error when $\beta < 0.01$ may be attributed to the first term in \eqref{eq:final_convergence_result} to converge more slowly as the iteration count $m \to \text{ITER}$, the maximum number of iterations used in Algorithm~\ref{alg:constrained_morl}.

\subsection{Extension to Applications} \label{subsec:constrained_deep}

In this section, we extend our algorithm to practical applications, demonstrating that our method is not limited to tabular settings. The Building (Section \ref{subsubsec:building}) and MoAnt (Section \ref{subsubsec:moant}) environments have continuous state and action spaces, while the Traffic environment (Section \ref{subsubsec:traffic}) has a continuous state space and a discrete action space. To ensure stable gradient estimation of our algorithm in continuous state spaces, we parameterize a gradient network $g_\theta(s) \in \mathbb{R}^{L+K}$ to estimate $\nabla_u v^{*}_{u, w}(s)$ and $\nabla_w v^{*}_{u, w}(s)$, following Theorem \ref{thm1_differentiable}. Implementation details for deep neural network policies, including gradient estimation and our constrained max-min algorithm for practical applications, are provided in Appendix \ref{append:realloc_implement}. Our sourcecode can be found in \url{https://github.com/Giseung-Park/Constrained-Maxmin-MORL}.

\subsubsection{Building Thermal Control}\label{subsubsec:building}

We consider a thermal control environment for a building from SustainGym \citep{yeh2023sustaingym}. The environment models a large commercial building with $N_{\text{zone}}=23$ thermal zones, where the system controls the zone temperatures under changing outside conditions and occupancy. At each timestep, the system observes the current temperatures of the $N_{\text{zone}}$ zones together with additional indicators (e.g., outdoor temperature, global horizontal irradiance, ground temperature, occupancy power, carbon intensity, and electricity price). Then the system outputs a continuous action that controls the heating/cooling power of each zone.

The zones are grouped into $K=3$ categories corresponding to the bottom, middle, and top floors of the building. The unconstrained reward is a $K$-dimensional vector, where each component quantifies the comfort of the corresponding group based on the proximity of the zone temperatures to their target values. The cost is defined as the total power consumption of the system, normalized by the environment. The system’s objective is to control the building so as to maximize the minimum cumulative comfort across all groups, while satisfying the energy constraint with its designed threshold value $C_{th}=180$.

We consider five baselines:
(i) a random policy that samples a continuous action at each timestep (Random),
(ii) unconstrained max-average SAC (MA-SAC) \citep{haarnoja18sac},
(iii) max-average SAC with a Lagrangian relaxation (MA-SAC-L) \citep{ha2020learning,yang21wcsac},
(iv) unconstrained max-min MORL algorithm with Gaussian smoothing (Max-min GS) \citep{park24maxmin}, and (v) unconstrained max-min MORL algorithm based on a two-player zero-sum game (ARAM) \citep{byeon2025multiobjectivereinforcementlearningmaxmin}. Each of the baselines lacks either max-min fairness ((iii)), constraint handling ((iv),(v)), or both ((i), (ii)). We report the mean performance computed across five random seeds. (See Appendices \ref{append:modify_maxmin} and \ref{append:building_overview} for the implementation of the max-min GS baseline and for details on the environment and the hyperparameters, respectively.)

\begin{table}[ht!]
    \centering
    \caption{Cumulative cost sum and worst-group comfort return in the building thermal control environment, with the two constraint-satisfying algorithms highlighted in \textbf{bold}}
    \label{tab:results_cost_max_queue_length_sum}
    \begin{tabular}{l c c}
        \toprule
        Algorithm &
        \makecell{Cost sum \\ $(C_{th}=180)$} &
        \makecell{Minimum \\ return $(\uparrow)$} \\
        \midrule
        Random     & $227.8$        & $433.9$  \\
        MA-SAC     & $203.7$        & $641.5$  \\
        MA-SAC-L   & $\textbf{171.4}$ & $620.9$  \\
        Ours       & $\textbf{178.7}$ & $639.8$  \\
        Max-min GS & $202.1$        & $653.6$  \\
        ARAM       & $276.9$        & $664.3$  \\
        \bottomrule
    \end{tabular}
\end{table}

\begin{table}[ht!]
    \centering
    \caption{Ablation study on the impact of weight learning}
    \label{tab:results_edge_ablation}
    \begin{tabular}{l c c}
        \toprule
        Algorithm &
        \makecell{Cost sum \\ $(C_{th}=180)$} &
        \makecell{Minimum \\ return $(\uparrow)$} \\
        \midrule
        Ours             & $178.7$ & $639.8$ \\
        w/o $w$ update   & $178.1$ & $626.7$ \\
        w/o $u$ update   & $200.7$ & $653.5$ \\
        w/o $(u,w)$ upd. & $222.0$ & $646.9$ \\
        \bottomrule
    \end{tabular}
\end{table}

Table \ref{tab:results_cost_max_queue_length_sum} presents the cumulative cost sum and the worst-group comfort return. Compared to the Random baseline, MA-SAC increases the worst-group comfort return but still fails to satisfy the power consumption constraint, with its cost sum exceeding the threshold $C_{th}=180$. While MA-SAC-L satisfies the power constraint, it does so at the cost of a lower worst-group comfort return compared to MA-SAC. However, our method achieves a higher worst-group comfort return relative to MA-SAC-L, while still adhering to the power constraint. We note that both Max-min GS and ARAM violate the power constraint.

Table \ref{tab:results_edge_ablation} shows that removing the max-min-related $w$ update reduces the worst-group comfort return, while ablating the constraint-related $u$ update causes constraint violations. These results confirm that our method effectively balances max-min performance with constraint satisfaction. (For further analysis of the effect of $\beta$ and the computational complexity, see Appendix \ref{append:building_ablation}.)

\subsubsection{Multi-objective Locomotion Control}\label{subsubsec:moant}

\begin{table}[ht!]
  \centering
  \caption{MoAnt-v5 results over five seeds, with the two constraint-satisfying algorithms highlighted in \textbf{bold}}
  \label{tab:moantv5_results}
  \begin{tabular}{l c c}
    \toprule 
    Algorithm
    & \makecell{Cost sum \\ $(C_{th}=50)$}
    & \makecell{Minimum \\ return $(\uparrow)$} \\
    \midrule
    Random      & 146.5           & 48.2 \\
    MA-SAC      & 275.3           & 98.8 \\  
    MA-SAC-L    & \textbf{47.8}   & 83.0 \\  
    Ours        & \textbf{28.3}   & 92.2 \\  
    Max-min GS  & 111.7           & 92.7 \\  
    ARAM        & 620.7           & 101.3 \\
    \bottomrule    
  \end{tabular}
\end{table}

We include MoAnt-v5 environment \citep{felten_toolkit_2023}, where the agent learns locomotion to maximize $x$ and $y$ velocities while keeping energy consumption under a threshold. We consider an asymmetric case where movement in the $x$ direction is attenuated by friction at rate 0.3. The velocities $(v_x, v_y)$, combined with bonus terms, constitute a two-dimensional reward, while the control cost is treated as a constraint. (See Appendix \ref{append:hyperparam_ant} for details on hyperparameters.)

Table \ref{tab:moantv5_results} shows that both our method and MA-SAC-L satisfy the constraints, but our method achieves superior max-min performance. In contrast, the other four algorithms severely violate the constraints, as they do not explicitly account for constraint satisfaction. Overall, our algorithm balances constraint satisfaction and max-min fairness. (See Appendix \ref{append:ant_different_c} for the result with $C_{th}=40$.)

\subsubsection{Greenhouse-gas-emission-aware Traffic
Management}\label{subsubsec:traffic}

We extend our method to an environment with a larger objective space. We note that MORL benchmark environments, particularly those with more than four objectives, remain limited \citep{hayes22survey}. Recently, \citet{park2025reward} proposed a traffic signal control environment with a 16-dimensional objective vector to evaluate scalability in the MORL setting. We modify this environment to incorporate a setting aimed at reducing greenhouse-gas emissions while maintaining max-min fairness.

In a simulated 16-lane four-way intersection, the agent receives a 37-dimensional continuous state encoding traffic information at each timestep. The agent manages the traffic lights by choosing the traffic signal phase as its action. The feedback signal consists of a 16-dimensional reward vector, where each component represents the negative total waiting time for a corresponding lane. Following prior work \citep{park2025reward}, we evaluate performance in an asymmetric traffic flow scenario. Vehicles enter from each direction according to predefined inflow proportions and may proceed straight or make left or right turns.

\begin{table}[ht!]
  \centering
  \caption{Traffic signal control results over five seeds, with the two constraint-satisfying algorithms highlighted in \textbf{bold}}
  \label{tab:tlc16d_results}
  \begin{tabular}{l c c}
    \toprule
    Algorithm
    & \makecell{Cost sum \\ $(C_{th}=70{,}000)$}
    & \makecell{Minimum \\ return $(\uparrow)$} \\
    \midrule
    Random      & 82{,}698  & $-27{,}523$ \\
    MA-PGO      & 77{,}757  & $-20{,}434$ \\
    MA-CPGO     & \textbf{67{,}887} & $-27{,}830$ \\
    Ours        & \textbf{69{,}147} & \textbf{$-25{,}229$} \\
    Max-min GS  & 73{,}162  & $-21{,}527$ \\
    ARAM        & 88{,}748  & $-19{,}700$ \\
    \bottomrule
  \end{tabular}
\end{table}

Our goal is to achieve fair traffic flow across all lanes while enforcing a constraint on total $\isotope{CO}_2$ emissions, contributing to a more sustainable traffic control system. Since this environment operates with a discrete action space, we replaced the MA-SAC and MA-SAC-L baselines with policy-gradient-optimization (PGO) variants, namely MA-PGO and MA-CPGO, respectively. (See Appendix \ref{append:traffic} for implementation details.)

As shown in Table~\ref{tab:tlc16d_results}, both our method and MA-CPGO satisfy the constraints; however, our approach achieves better max-min performance. In contrast, the remaining four algorithms exhibit constraint violations. Overall, our method consistently balances max-min fairness with constraint satisfaction.

\section{Conclusion}

We have proposed a MORL framework that integrates max-min fairness with constraint satisfaction. Our approach offers flexibility in modeling problems that satisfy fairness and operational constraints. We establish a theoretical foundation and develop an algorithm that achieves a balance between max-min fairness and constraint satisfaction, as demonstrated in our experiments. A discussion of limitations and future work is provided in Appendix \ref{append:limitations}.

\section*{Acknowledgment}

We sincerely thank anonymous reviewers for their valuable feedback during the review process. This work was supported by  the Institute of Information \& Communications Technology Planning \& Evaluation (IITP) under Grant No. RS-2024-00457882 (AI Research Hub Project) and the National Research Foundation of Korea (NRF) grant (2022K1A3A1A31093462) funded by the Korea government (MSIT), and by 
the Israel Ministry of Innovation, Science \& Technology, under Grant No. 1001556423 and the ISF grant 2197/22.

\section*{Impact Statement}

In this work, we propose a MORL algorithm that integrates max-min fairness with constraint satisfaction. First, max-min MORL plays a critical role in promoting fairness across objectives in domains such as traffic management and resource allocation. Unfair results can lead to user dissatisfaction and, in turn, degrade overall system performance, for example, by contributing to traffic congestion. Second, incorporating constraints into RL systems is essential for the responsible development of AI systems, especially given real-world limitations on resources such as electricity, power consumption, and fossil fuels.

Our work advances the goal of sustainable AI by simultaneously incorporating fairness and resource constraints into decision-making. This contrasts to traditional methods that prioritize performance alone, often overlooking concerns of equity and efficient resource use. We believe our framework has the potential to make a meaningful and positive impact on the broader AI community, not only in resource allocation but also in emerging areas such as fair and safe alignment of large language models.

\bibliography{references}
\bibliographystyle{icml2026}

\newpage
\appendix
\onecolumn

\section{ Proof on Optimality Gap } \label{append:suboptimality_gap}

\begin{proof}
With a slight abuse of notation, let $J(\pi):= [J_1(\pi),\cdots,J_{K}(\pi)]^\top \in \mathbb{R}^{K}$ and let $\mathcal{H}(\pi)$ denote the expected cumulative entropy of $\pi$. We express the optimization of \eqref{eq:firstForm} and \eqref{eq:original_maxmin_f_const} with $f=\min$ as follows:
\begin{equation} \label{eq:append_reg_prob_pi}
    \max_{ \pi \in \Pi_{\text{feas}} } \min_{1 \leq k \leq K} J_k(\pi) + \beta \mathcal{H}(\pi)
\end{equation}
where $\Pi_{\text{feas}} := \left\{ \pi \,\middle|\, \mathbb{E}_{\mu_0, \pi} \left[ \sum_{t=0}^\infty \gamma^t c_t^{(l)} \right] \ge C^{(l)}, \quad \forall l = 1, \dots, L \right\}$ and it is assumed to be non-empty under the typical assumption in constrained RL \citep{tessler2018reward,ha2020learning}.

Let the optimal solution to the regularized problem in \eqref{eq:append_reg_prob_pi} be $\pi_r^*:= \arg \max_{ \pi \in \Pi_{\text{feas}} } \min_{1 \leq k \leq K} J_k(\pi) + \beta \mathcal{H}(\pi) = \arg \max_{ \pi \in \Pi_{\text{feas}} } \min_{w} \langle w, J(\pi) \rangle + \beta \mathcal{H}(\pi)$ where $\min_{w \in \Delta^K}$ is abbreviated as $\min_w$ for brevity. Let $w^*(\pi) := \arg \min_w \langle w,J(\pi)\rangle$ and $w_r^* := w^*(\pi_r^*)$.

Let the optimal solution to the unregularized problem be $\pi^*:=\arg \max_{ \pi \in \Pi_{\text{feas}} } \min_{w} \langle w, J(\pi) \rangle$ and $w^* = w^*(\pi^*)$. Let the optimal max-min value of the unregularized problem be $V_{w^*}^{\pi^*} := \langle w^*,J(\pi^*)\rangle$. Similarly, let the optimal value of the regularized problem be $V_{w_r^*}^{\pi_r^*} := \langle w_r^*,J(\pi_r^*)\rangle$. For simplicity, we abbreviate $\max_{ \pi \in \Pi_{\text{feas}} }$ as $\max_\pi$ below.

First, we have the following:
\begin{equation}
    V_{w_r^*}^{\pi_r^*} + \beta {\mathcal{H}}(\pi_r^*) = \max_\pi\min_w \langle w,J(\pi)\rangle+\beta \mathcal{H}(\pi) \ge \min_w \langle w,J(\pi^*)\rangle+\beta  \mathcal{H}(\pi^*) =\langle w^*,J(\pi^*)\rangle+\beta  \mathcal{H}(\pi^*) = V_{w^*}^{\pi^*} +\beta  \mathcal{H}(\pi^*).
\end{equation}

Since the stepwise entropy of $\pi$ is upper-bounded by $\log |\mathcal{A}|$, we have $0\le \mathcal{H}(\pi)\le \frac{\log|\mathcal{A}|}{1-\gamma}$ and obtain 

\begin{equation} \label{eq:append1_upper_bound}
    V_{w^*}^{\pi^*} -V_{w_r^*}^{\pi_r^*}  \le  \frac{\beta\log|\mathcal{A}|}{1-\gamma}.
\end{equation}

Similarly, we have the following:
\begin{equation} \label{eq:append1_lower_bound}
    V_{w^*}^{\pi^*} = \max_\pi\min_w \langle w,J(\pi)\rangle \ge \min_w \langle w,J(\pi_r^*)\rangle =\langle w^*_r,J(\pi_r^*)\rangle = V_{w_r^*}^{\pi_r^*}.
\end{equation}

Combining \eqref{eq:append1_upper_bound} and \eqref{eq:append1_lower_bound}, we obtain that the optimality value gap ranges as $0 \le V_{w^*}^{\pi^*} - V_{w_r^*}^{\pi_r^*} \le \frac{\beta\log|\mathcal{A}|}{1-\gamma}$.
\end{proof}

\newpage
\section{Proof on Equivalent Optimization} \label{append:eqiv_conv_opt}

\begin{proof}
The dual problem of \eqref{eq:maxmin_obj}, \eqref{eq:maxmin_flow}, and \eqref{eq:maxmin_const_c}  is rewritten as follows:
    \begin{align} \label{eq:dual_of_maxent_maxmin_const}
         & {\min_{u \geq 0}} \min_{w\geq 0, v} \hspace{-0.1em}\min_{\xi \geq 0} \hspace{-0.1em}\max_{\rho,b} \hspace{-0.2em}\Bigg[ b(1 - \hspace{-0.1em}\sum_{k=1}^K w_k) \hspace{-0.1em}-\hspace{-0.1em} \beta \hspace{-0.1em}\sum_{s,a}\hspace{-0.1em} \rho(s,a) \hspace{-0.1em}\log \frac{\rho(s,a)}{\sum_{a'} \rho(s,a')}    \nonumber \\
          & + \hspace{-0.1em}\sum_{s}\hspace{-0.1em}\mu_0(s) v(s) + \sum_{s,a} \xi(s,a)\rho(s,a) {- \sum_{l=1}^L u_l C^{(l)}} \nonumber \\
          & + \hspace{-0.1em}\sum_{s,a} \rho(s,a)\hspace{-0.1em} [\sum_{k=1}^K w_k r^{(k)}\hspace{-0.1em}(s,a) \hspace{-0.1em} {+\sum_{l=1}^L u_l c^{(l)}(s,a)} +\hspace{-0.1em} \gamma \hspace{-0.1em}\sum_{s'} T(s' | s,a) v(s') \hspace{-0.1em}-\hspace{-0.1em} v(s)  ] \Bigg].
    \end{align}

Here $b$ is an auxiliary variable satisfying $\sum_{s,a} r^{(k)}(s,a) \rho(s,a) \geq b, ~ 1 \leq k \leq K$. Let $\eta_{u, v,w}(s,a) := \sum_{k=1}^K w_k r^{(k)}(s,a) +\sum_{l=1}^L u_l c^{(l)}(s,a) + \gamma \sum_{s'} T(s' | s,a) v(s') - v(s)$. We apply KKT conditions.

1. Stationarity condition gives
\begin{equation} \label{eq:stationarity_rho}
    \forall (s,a), ~~ - \beta \log \frac{\rho(s,a)}{\sum_{a'} \rho(s,a')}  + \xi(s,a) + \eta_{u,v,w}(s,a) = 0
\end{equation}
and
\begin{equation} \label{eq:w_sum_to_1}
    1 - \sum_{k=1}^K w_k = 0.
\end{equation}

2. Complementary slackness condition gives 
\begin{equation} \label{eq:slackness}
    \forall (s,a), ~~ \xi(s,a) \rho(s,a) = 0.
\end{equation}

From \eqref{eq:stationarity_rho}, we derive 
\begin{equation} \label{eq:kkt_intermediate_const}
    \forall (s,a), ~~ \frac{\rho(s,a)}{\sum_{a'} \rho(s,a')} = \exp \left( \frac{\xi(s,a) + \eta_{u,v,w}(s,a)}{\beta} \right) 
\end{equation}
so $\rho(s,a) > 0$ and $\xi(s,a) = 0$ from \eqref{eq:slackness}.  Therefore,
\begin{equation} \label{eq:optimal_policy_append}
    \forall (s,a), ~~ \frac{\rho(s,a)}{\sum_{a'} \rho(s,a')} = \exp \left( \frac{\eta_{u,v,w}(s,a)}{\beta} \right). 
\end{equation}

Inserting \eqref{eq:w_sum_to_1} and \eqref{eq:optimal_policy_append}, we obtain:
\begin{equation} \label{eq:joint_dual_maxent_maxmin_obj_const}
    \min_{u \in \mathbb{R}_{+}^L} \min_{v,w} \sum_s \mu_0(s) v(s) - \sum_{l=1}^L u_l C^{(l)}
\end{equation}
\begin{equation} \label{eq:joint_dual_maxent_maxmin}
     \forall s, ~~ v(s) \hspace{-0.2em} = \hspace{-0.2em}\beta \log \hspace{-0.2em}\sum_a \hspace{-0.2em}\exp [ \frac{1}{\beta}  \{ \sum_{k=1}^K \hspace{-0.2em}w_k r^{(k)}\hspace{-0.2em}(s,a) +\sum_{l=1}^L u_l c^{(l)}(s,a)  + \gamma \hspace{-0.3em}\sum_{s'}\hspace{-0.2em}T(s'|s,a)v(s') \} ] := [\mathcal{T}_{u,w} v](s)
\end{equation}
\begin{equation} 
    \sum_{k=1}^K w_k = 1; ~ w_k \geq 0 ~~ \forall 1 \leq k \leq K.
\end{equation}
where \eqref{eq:joint_dual_maxent_maxmin} is derived from $\sum_a \exp \left( \frac{\eta_{u,v,w}(s,a)}{\beta} \right) = 1, ~~ \forall s$, and strong duality holds under Slater condition \citep{boyd2004convex}. Since $\mathcal{T}_{u,w}$ is a contraction mapping \citep{fox2016taming,haarnoja2017sql}, it has the unique fixed point $v^*_{u,w}$. Therefore, $v = v^*_{u,w}$ is the only feasible solution  that satisfies \eqref{eq:joint_dual_maxent_maxmin} and we have the following:
\begin{equation} 
    \min_{u \in \mathbb{R}_{+}^L, w \in \Delta^K} \mathcal{L}(u,w) = \sum_s \mu_0(s) v^{*}_{u,w}(s) - \sum_{l=1}^L u_l C^{(l)}.
\end{equation}
Under Slater condition, this optimization attains the same optimal value as in the original convex optimization. The convexity of $v^{*}_{u,w}(s)$ follows from the proof of Theorem 4.1 in \citet{park24maxmin}.

\end{proof}

\section{ Proof of Differentiability } \label{append:differentiable}

\begin{proof}
    For notational simplicity, we first present the theorem for the case $L=0$ (i.e., without constraints) and then extend the result to the case  $L \geq 1$. This holds because, given $(u,w) \in \mathbb{R}^{L+K}$, the mapping $\mathcal{T}_{u,w}$ is defined by $[\mathcal{T}_{u,w} v](s) = \beta \log \sum_a \exp [ \frac{1}{\beta} \{ \sum_{l=1}^L u_l c^{(l)}(s,a)  + \sum_{k=1}^K w_k r^{(k)}(s,a) + \gamma \hspace{-0.3em}\sum_{s'}\hspace{-0.2em}T(s'|s,a) v(s') \} ], \forall s$, and we can regard the concatenation of $c(s,a)$ and $r(s,a)$ as a new vector reward of size $L+K$ with its weight $(u,w)$. Therefore, we use the notation of the following mapping $[\mathcal{T}_{w} v](s) = \beta \log \sum_a \exp [ \frac{1}{\beta} \{ \sum_{k=1}^K w_k r^{(k)}(s,a) + \gamma \hspace{-0.3em}\sum_{s'}\hspace{-0.2em}T(s'|s,a) v(s') \} ], \forall s$. 

    Let $|\mathcal{S}|=p$. We define $F(w,v):=v-\mathcal{T}_w v$, $F: \mathbb{R}^{K} \times \mathbb{R}^{p} \to \mathbb{R}^{p}$. Let $v^*_w$ be the unique fixed point of $\mathcal{T}_w$. Then $F(w,v^*_w)=v^*_w-\mathcal{T}_w v^*_w = 0$. Here $v^*_w$ is implicitly expressed with respect to (w.r.t.) $w$, and we aim to analyze $v^*_w$ using \textit{implicit function theorem}.

    First of all, $F: \mathbb{R}^{K} \times \mathbb{R}^{p} \to \mathbb{R}^{p}$ is a continuously differentiable function. For each $s$, $F(w,v)(s)=v(s)- [\mathcal{T}_w v](s) = v(s) - \beta \log \sum_a \exp [ \frac{1}{\beta} \{ \sum_{k=1}^K w_k r^{(k)}(s,a) + \gamma \hspace{-0.3em}\sum_{s'}\hspace{-0.2em}T(s'|s,a) v(s') \} ]$ which is a composition of linear, logarithm, summation, exponential, and linear functions.

    Now we fix $w$ and check whether the Jacobian matrix $\partial_v F(w,v)|_{v = v^{*}_w} \in \mathbb{R}^{p \times p}$ is invertible where $[\partial_v F(w,v)|_{v = v^{*}_w}]_{ij} = \frac{\partial F(w,v)(s_i)}{\partial v(s_j)}|_{v = v^{*}_w}$. We have $\partial_v F(w,v) = I_p -\partial_v [\mathcal{T}_w v]$ where $I_p$ is the $p \times p$ identity matrix. Then
    \begin{equation}
         \frac{\partial [\mathcal{T}_w v](s_i)}{\partial v(s_j)}|_{v = v^{*}_w} = \gamma \mathbb{E}_{a \sim \pi^*_w(\cdot | s_i)}[T(s_j | s_i, a)]
    \end{equation}
    where
    \begin{equation} \label{eq:pi_star_w_only}
        \pi^*_w(a | s) = \frac{ \exp [ \frac{1}{\beta} \{ \sum_{k=1}^K w_k r^{(k)}(s,a) + \gamma \hspace{-0.3em}\sum_{s'}\hspace{-0.2em}T(s'|s,a) v^{*}_w(s') \} ]}{\sum_{a'} \exp [ \frac{1}{\beta} \{ \sum_{k=1}^K w_k r^{(k)}(s,a') + \gamma \hspace{-0.3em}\sum_{s'}\hspace{-0.2em}T(s'|s,a') v^{*}_w(s') \} ]}.
    \end{equation} 

    If we denote $T(\cdot |s,a) := [T(s_1|s,a)\cdots T(s_p|s,a)]$, we have
    \begin{equation}
        \partial_v F(w,v)|_{v = v^{*}_w} =I_p -\gamma\begin{bmatrix}
        \mathbb{E}_{a \sim \pi^*_w(\cdot | s_1)}[T(\cdot | s_1, a)] \\
        \vdots\\
        \mathbb{E}_{a \sim \pi^*_w(\cdot | s_p)}[T(\cdot | s_p, a)]
        \end{bmatrix} =: I_p -\gamma\begin{bmatrix}
        T^{\pi^*_w}(\cdot | s_1) \\
        \vdots\\
        T^{\pi^*_w}(\cdot | s_p)
        \end{bmatrix}
    \end{equation}
    where $T^{\pi^*_w}(s_j | s_i) = \mathbb{E}_{a \sim \pi^*_w(\cdot | s_i)}[T(s_j | s_i, a)] =: [T^{\pi^*_{w}}]_{ij}$. Then $I_p - \gamma T^{\pi^*_w}$ is invertible since $T^{\pi^*_w}$ is a row-stochastic square matrix \citep{horn2012matrix}. 

    Therefore, $\partial_v F(w,v)|_{v = v^{*}_w}$ is invertible. By implicit function theorem, there exists an open set $U\subset\mathbb{R}^K$ containing $w$ such that there exists a unique continuously differentiable function $h:U\to\mathbb{R}^{p}$ such that $h(w)=v^{*}_w$ and $F(w',h(w'))=0$, i.e., $h(w')=\mathcal{T}_{w'} h(w')$ for all $w'\in U$. Since  $h(w')$ is the unique fixed point of $\mathcal{T}_{w'}$, $h(w') = v^{*}_{w'}, \forall w'\in U$. If we use the implicit function theorem for all $w \in \mathbb{R}^K$, we can conclude that $v = v^*_w$ is a unique continuously differentiable function in $w \in \mathbb{R}^K$ that satisfies $v = \mathcal{T}_w v$.
    
    Moreover, for $1 \leq k \leq K$,
    \begin{equation}
         \frac{\partial [\mathcal{T}_w v](s_i)}{\partial w_k}|_{v = v^{*}_w} = \mathbb{E}_{a \sim \pi^*_w(\cdot | s_i)}[r^{(k)}(s_i, a)].
    \end{equation}
    With a slight abuse of notation, if we denote $r(s,a) := [r^{(1)}(s,a)\cdots r^{(K)}(s,a)]$, we have
    \begin{equation}
        \partial_w F(w,v)|_{v = v^{*}_w} = -\begin{bmatrix}
        \mathbb{E}_{a \sim \pi^*_w(\cdot | s_1)}[ r(s_1, a)] \\
        \vdots\\
        \mathbb{E}_{a \sim \pi^*_w(\cdot | s_p)}[r(s_p, a)]
        \end{bmatrix} =: -\begin{bmatrix}
        r^{\pi^*_w}(s_1) \\
        \vdots\\
        r^{\pi^*_w}(s_p)
        \end{bmatrix}
    \end{equation}
    where $r^{\pi^*_w}(s) = \mathbb{E}_{a \sim \pi^*_w(\cdot | s)}[ r(s, a)] \in \mathbb{R}^{1 \times K}$. By implicit function theorem, we have    
    \begin{equation} \label{eq:grad_all}
        \begin{bmatrix}
    \nabla_{w} v^*_{w}(s_1)^\top \\
    \vdots\\
    \nabla_{w} v^*_{w}(s_p)^\top
\end{bmatrix} = - [\partial_v F(w,v)|_{v = v^{*}_w}]^{-1} \partial_w F(w,v)|_{v = v^{*}_w} = (I_p - \gamma T^{\pi^*_w})^{-1} r^{\pi^*_w}.
    \end{equation}
    Note that the $k$-th ($1 \leq k \leq K$) column of \eqref{eq:grad_all} is equivalent to the policy evaluation of $\pi^*_w$ considering a scalar reward function $r^{(k)}$ \citep{silver2015,sutton2018reinforcement}. We denote the value function as $v^{\pi^*_w}_k \in \mathbb{R}^p$. Then
    \begin{equation}
        \frac{\partial v^*_w (s)}{\partial w_k} = v^{\pi^*_w}_k(s), ~ \forall s.
    \end{equation}
    If we denote $v^{\pi^*_w}(s) = [v^{\pi^*_w}_1(s), \cdots, v^{\pi^*_w}_K(s)]^\top \in \mathbb{R}^K$ for all $s$, then $v^{\pi^*_w}(s)$ is the value function evaluated with the policy $\pi^*_w$ in a given MOMDP. We have
    \begin{equation}
        \nabla_{w} v^*_{w}(s) = v^{\pi^*_w}(s), ~ \forall s.
    \end{equation}
    For the case of $L \geq 1$, the only difference is that $\pi^*_{w}$ is changed to 
    \begin{equation}
        \pi^*_{u,w}(a | s) = \frac{ \exp [ \frac{1}{\beta} \{ \sum_{l=1}^L u_l c^{(l)}(s,a) + \sum_{k=1}^K w_k r^{(k)}(s,a) + \gamma \hspace{-0.3em}\sum_{s'}\hspace{-0.2em}T(s'|s,a) v^{*}_{u,w}(s') \} ]}{\sum_{a'} \exp [ \frac{1}{\beta} \{ \sum_{l=1}^L u_l c^{(l)}(s,a') + \sum_{k=1}^K w_k r^{(k)}(s,a') + \gamma \hspace{-0.3em}\sum_{s'}\hspace{-0.2em}T(s'|s,a') v^{*}_{u,w}(s') \} ]}
    \end{equation} 
    where $v^{*}_{u, w}$ is the fixed point of the operator $\mathcal{T}_{u,w}$: 
    \begin{equation} 
        \forall s, ~ [\mathcal{T}_{u,w} v](s) = \beta \log \sum_a \exp [ \frac{1}{\beta} \{ \sum_{l=1}^L u_l c^{(l)}(s,a) + \sum_{k=1}^K w_k r^{(k)}(s,a) + \gamma \hspace{-0.3em}\sum_{s'}\hspace{-0.2em}T(s'|s,a) v(s') \} ]
    \end{equation}
    and the column size of $r^{\pi^*_{u,w}}$ is $L+K$, not $K$. We denote $v_c^{\pi^*_{u,w}}(s) \in \mathbb{R}^L$, $v_r^{\pi^*_{u,w}}(s) \in \mathbb{R}^K$ as the value functions evaluated with the policy $\pi^*_{u,w}$ for constrained reward $c$ and unconstrained reward $r$, respectively. Finally, we have
    \begin{equation}
        \nabla_{u} v^*_{u, w}(s) = v_c^{\pi^*_{u,w}}(s), ~ \nabla_{w} v^*_{u, w}(s) = v_r^{\pi^*_{u,w}}(s), ~ \forall s.
    \end{equation}
\end{proof}

\section{ Proof of Smoothness } \label{append:thm_smooth}

\begin{proof}
    Let $a = (u',w')$ and $b = (u'', w'')$ in $\mathbb{R}^{L+K}$ and $a \neq b$. By the differentiability of $\nabla v^*_{u,w}(s)$ for each $s$ in Proposition \ref{thm2_twice_differentiable}, $\nabla v^*_{u,w}(s)$ is continuous on $\mathbb{R}^{L+K}$. Specifically, $\nabla v^*_{u,w}(s)$ is continuous on $\{a + t(b-a) | 0 \leq t \leq 1  \}$ and differentiable on $\{a + t(b-a) | 0 < t < 1  \}$. Therefore, by the generalized mean value inequality, we have
    \begin{equation} \label{eq:general_mean_ineq}
        \| \nabla v^*_{u,w}(s)|_{(u,w) = b} - \nabla v^*_{u,w}(s)|_{(u,w) = a}  \|_2 \leq \sup_{t \in [0,1] }\| H[v^*_{u,w}(s)]|_{(u,w) = a + t(b-a)} \|_2  \| b - a \|_2.
    \end{equation}
    where $H[v^*_{u,w}(s)]$ is the Hessian of $v^*_{u,w}(s)$. Let $\lambda_{\text{max}}( A )$ be the maximum eigenvalue of a real symmetric matrix $A$. For each $s_k ~ (1 \leq k \leq p)$, the eigenvalues of $H[v^*_{u,w}(s_k)]$ are nonnegative because of its positive semidefiniteness in \eqref{eq:hessian} in Proposition \ref{thm2_twice_differentiable}. Since trace operator is additive, we have
    \begin{equation}
        \| H[v^*_{u,w}(s_k)] \|_2 = \lambda_{\text{max}}( H[v^*_{u,w}(s_k)] ) \leq \text{Tr}( H[v^*_{u,w}(s_k)] ) = \frac{1}{\beta} \sum_{l=1}^p [(I_p - \gamma T^{\pi^*_{u,w}})^{-1}]_{kl} \text{Tr}(B^{\pi^*_{u,w}}(s_l)).
    \end{equation}
     For each $s$, we also have
    \begin{equation}
        \text{Tr}( B^{\pi^*_{u,w}}(s))   = \sum_{k=1}^{L+K} \text{Var}(Q_k^{\pi^*_{u,w}}(s,a)) \leq \sum_{k=1}^{L+K} \mathbb{E}[ | Q_k^{\pi^*_{u,w}}(s,\cdot) |^2 ] \leq \sum_{k=1}^{L+K} \left( \frac{r^{(k)}_{\text{max}}}{1-\gamma} \right)^2
    \end{equation}
    where $Q_k^{\pi^*_{u,w}}(s,a)$ is the $k$-th dimension  of $Q^{\pi^*_{u,w}}(s,a) \in \mathbb{R}^{K+L}$.
      
    Since $(I_p - \gamma T^{\pi^*_{u,w}})^{-1} = \sum_{i=0}^\infty (\gamma T^{\pi^*_{u,w}})^i$ and each $(T^{\pi^*_{u,w}})^i$ is a probability transition matrix,
    \begin{equation}
        \| H[v^*_{u,w}(s_k)] \|_2 \leq \frac{1}{\beta} \sum_{m=1}^{L+K} \left( \frac{r^{(m)}_{\text{max}}}{1-\gamma} \right)^2 \left( \sum_{i=0}^\infty \gamma^i \sum_{l=1}^p (T^{\pi^*_{u,w}})^i_{kl} \right) = \frac{1}{\beta (1-\gamma)} \sum_{m=1}^{L+K} \left( \frac{r^{(m)}_{\text{max}}}{1-\gamma} \right)^2.
    \end{equation}
    Here, the summations of $ \sum_{l=1}^p$ and $\sum_{i=0}^\infty $ are interchangable due to the absolute summability. Finally, we have $\| H[\mathcal{L}(u,w)]\|_2 \leq \sum_s \mu_0(s) \| H[v^*_{u,w}(s)] \|_2 \leq \frac{1}{\beta (1-\gamma)} \sum_{m=1}^{L+K} \left( \frac{r^{(m)}_{\text{max}}}{1-\gamma} \right)^2 =: \alpha$.   
    
    It should be noted that $\| H[v^*_{u,w}(s_k)] \|_2$ is uniformly bounded regardless of $s_k$ and $(u,w)$.  Therefore, from \eqref{eq:general_mean_ineq}, $\nabla \mathcal{L}(u,w)$ is $\alpha$-Lipschitz continuous in $\| \cdot \|_2$.
\end{proof}

\section{ Proof of Twice-Differentiability } \label{append:twice_differentiable}

\begin{proof}
    Here we also use the implicit function theorem and follow a similar logic in the proof of differentiability in Appendix \ref{append:differentiable}. Let $|\mathcal{S}| = p$. We show the theorem for the case of $L=0$ to guarantee notational simplicity. For each $1 \leq i \leq K$, we want to show that $\frac{\partial v^*_w}{\partial w_i} := [\frac{\partial v^*_w(s_1)}{\partial w_i}, \cdots, \frac{\partial v^*_w(s_p)}{\partial w_i}]^\top \in \mathbb{R}^{p}$ is differentiable in $w \in \mathbb{R}^K$. From the result in Appendix \ref{append:differentiable}, we have
    \begin{equation}
        \frac{\partial v^*_w}{\partial w_i} = v^{\pi^*_w}_i
    \end{equation}
    where $v^{\pi^*_w}_i \in \mathbb{R}^p$ is the value function evaluated with the policy $\pi^*_w$ in \eqref{eq:pi_star_w_only} with the $i$-th reward $r^{(i)}$. Let $r_i^{\pi^*_w}(s) = \mathbb{E}_{a \sim \pi^*_w(\cdot | s)}[ r^{(i)}(s, a)] \in \mathbb{R}$. From \eqref{eq:grad_all}, we have
    \begin{equation}
        v^{\pi^*_w}_i = (I_p - \gamma T^{\pi^*_w})^{-1} r_i^{\pi^*_w}
    \end{equation}
    or equivalently,
    \begin{equation}
        v^{\pi^*_w}_i = r_i^{\pi^*_w} + \gamma T^{\pi^*_w} v^{\pi^*_w}_i =: \mathcal{T}_w^* v^{\pi^*_w}_i.
    \end{equation}

    We define $F(w,v):=v-\mathcal{T}_w^* v$, $F: \mathbb{R}^{K} \times \mathbb{R}^{p} \to \mathbb{R}^{p}$. Then $F(w,v^{\pi^*_w}_i)=v^{\pi^*_w}_i-\mathcal{T}_w v^{\pi^*_w}_i = 0$. Here $v^{\pi^*_w}_i$ is the unique fixed point of $\mathcal{T}_w^*$ and is implicitly expressed w.r.t. $w$, and we aim to analyze $v^{\pi^*_w}_i$ using \textit{implicit function theorem}.

    First of all, $F: \mathbb{R}^{K} \times \mathbb{R}^{p} \to \mathbb{R}^{p}$ is a continuously differentiable function. For each $s$, $F(w,v)(s)=v(s)- [\mathcal{T}_w^* v](s) =  v(s) - [ r_i^{\pi^*_w}(s) + \gamma \sum_{s'} T^{\pi^*_w}(s'|s) v(s')  ] = v(s) - \sum_a \pi^*_w(a|s)[    r^{(i)}(s,a) + \gamma \sum_{s'} T(s'|s,a) v(s')  ]$. As seen in \eqref{eq:pi_star_w_only}, $\pi^*_w$ contains $v^*_w$ which is continuously differentiable in $w$ (as a result of the proof in Appendix \ref{append:differentiable}), and $\pi^*_w$ is a composition of quotient, exponential, summation and linear functions of $w$ and $v^*_w$. 

    Now we fix $w$ and check whether the Jacobian matrix $\partial_v F(w,v)|_{v = v^{\pi^*_w}_i} \in \mathbb{R}^{p \times p}$ is invertible where $[\partial_v F(w,v)|_{v = v^{\pi^*_w}_i}]_{ij} = \frac{\partial F(w,v)(s_i)}{\partial v(s_j)}|_{v = v^{\pi^*_w}_i}$. We have $\partial_v F(w,v) = I_p -\partial_v [\mathcal{T}_w^* v]$ where $I_p$ is the $p \times p$ identity matrix. Then
    \begin{equation}
         \frac{\partial [\mathcal{T}_w^* v](s_i)}{\partial v(s_j)}|_{v = v^{\pi^*_w}_i} = \gamma \mathbb{E}_{a \sim \pi^*_w(\cdot | s_i)}[T(s_j | s_i, a)].
    \end{equation}

    If we denote $T(\cdot |s,a) := [T(s_1|s,a)\cdots T(s_p|s,a)]$, we have
    \begin{equation}
        \partial_v F(w,v)|_{v = v^{\pi^*_w}_i} =I_p -\gamma\begin{bmatrix}
        \mathbb{E}_{a \sim \pi^*_w(\cdot | s_1)}[T(\cdot | s_1, a)] \\
        \vdots\\
        \mathbb{E}_{a \sim \pi^*_w(\cdot | s_p)}[T(\cdot | s_p, a)]
        \end{bmatrix} =: I_p -\gamma\begin{bmatrix}
        T^{\pi^*_w}(\cdot | s_1) \\
        \vdots\\
        T^{\pi^*_w}(\cdot | s_p)
        \end{bmatrix}
    \end{equation}
    where $T^{\pi^*_w}(s_j | s_i) = \mathbb{E}_{a \sim \pi^*_w(\cdot | s_i)}[T(s_j | s_i, a)] =: [T^{\pi^*_{w}}]_{ij}$. Then $I_p - \gamma T^{\pi^*_w}$ is invertible since $T^{\pi^*_w}$ is a row-stochastic square matrix \citep{horn2012matrix}. 

    Therefore, $\partial_v F(w,v)|_{v = v^{\pi^*_w}_i}$ is invertible. By implicit function theorem, there exists an open set $U\subset\mathbb{R}^K$ containing $w$ such that there exists a unique continuously differentiable function $h:U\to\mathbb{R}^{p}$ such that $h(w)=v^{\pi^*_w}_i$ and $F(w',h(w'))=0$, i.e., $h(w')=\mathcal{T}_{w'}^* h(w')$ for all $w'\in U$. Since  $h(w')$ is the unique fixed point of $\mathcal{T}_{w'}^*$, $h(w') = v^{\pi^*_{w'}}_i, \forall w'\in U$. If we use the implicit function theorem for all $w \in \mathbb{R}^K$, we can conclude that $v = v^{\pi^*_w}_i$ is a unique continuously differentiable function in $w \in \mathbb{R}^K$ that satisfies $v = \mathcal{T}_w^* v$.
    
    Now, for $1 \leq j \leq K$, we aim to calculate $\frac{\partial [\mathcal{T}_w^* v](s)}{\partial w_j}|_{v = v^{\pi^*_w}_i}$. For notational simplicity, let $Q^*_w(s,a) := \sum_{k=1}^K w_k r^{(k)}(s,a) + \gamma \hspace{-0.3em}\sum_{s'}\hspace{-0.2em}T(s'|s,a) v^{*}_w(s')$. Then we express $\pi^*_w$ as follows:
    \begin{equation}
        \pi^*_w(a | s) = \frac{ \exp [ \frac{1}{\beta} \{ Q^*_w(s,a) \} ]}{\sum_{a'} \exp [ \frac{1}{\beta} \{ Q^*_w(s,a') \} ]}.
    \end{equation} 
    We also have
    {\small
    \begin{equation}
        \frac{\partial Q^*_w(s,a)}{\partial w_j} = r^{(j)}(s,a) + \gamma \hspace{-0.3em}\sum_{s'}\hspace{-0.2em}T(s'|s,a) \frac{\partial v^{*}_w(s')}{\partial w_j} = r^{(j)}(s,a) + \gamma \hspace{-0.3em}\sum_{s'}\hspace{-0.2em}T(s'|s,a) v^{\pi^*_w}_j(s') := Q_j^{\pi^*_w}(s,a).
    \end{equation}
    }
    
    In other words, we denote $Q_j^{\pi^*_w}$ as the action-value function evaluated with $\pi^*_w$ for a scalar reward function $r^{(j)}$. Then    
    \begin{equation}
        \frac{\partial [\mathcal{T}_w^* v](s)}{\partial w_j}|_{v = v^{\pi^*_w}_i} = \sum_a Q_i^{\pi^*_w}(s,a) \frac{\partial \pi^*_w(a | s) }{\partial w_j} 
    \end{equation}
    which is equivalent to
    \begin{equation}
        \frac{\partial [\mathcal{T}_w^* v](s)}{\partial w_j}|_{v = v^{\pi^*_w}_i}  = \frac{1}{\beta}  \sum_a Q_i^{\pi^*_w}(s,a) \bigg[ \pi^*_w(a | s) Q_j^{\pi^*_w}(s,a) - \pi^*_w(a | s) \sum_{a'} \{ \pi^*_w(a' | s) Q_j^{\pi^*_w}(s,a') \} \bigg]
    \end{equation}
    and we have
    {\small
    \begin{equation}
        \frac{\partial [\mathcal{T}_w^* v](s)}{\partial w_j}|_{v = v^{\pi^*_w}_i} = \frac{1}{\beta} \bigg[ \mathbb{E}_{a \sim \pi^*_w(\cdot | s)} [ Q_i^{\pi^*_w}(s,a) Q_j^{\pi^*_w}(s,a) ] - \mathbb{E}_{a \sim \pi^*_w(\cdot | s)} [ Q_i^{\pi^*_w}(s,a) ] \mathbb{E}_{a \sim \pi^*_w(\cdot | s)} [ Q_j^{\pi^*_w}(s,a) ] \} \bigg].
    \end{equation}
    }
    
    By implicit function theorem, we have    
    \begin{equation}
        \begin{bmatrix}
    \nabla_{w} \frac{\partial v^*_{w}(s_1)}{\partial w_i}^\top \\
    \vdots\\
    \nabla_{w} \frac{\partial v^*_{w}(s_p)}{\partial w_i}^\top
\end{bmatrix} = - [\partial_v F(w,v)|_{v = v^{\pi^*_w}_i}]^{-1} \partial_w F(w,v)|_{v = v^{\pi^*_w}_i} = \frac{1}{\beta}(I_p - \gamma T^{\pi^*_w})^{-1} E_i^{\pi^*_w}
    \end{equation}
    where $E_i^{\pi^*_w}$ is a $p \times K$ matrix where for each row corresponding to $s$, the $j$-th element is $\mathbb{E}_{a \sim \pi^*_w(\cdot | s)} [ Q_i^{\pi^*_w}(s,a) Q_j^{\pi^*_w}(s,a) ] - \mathbb{E}_{a \sim \pi^*_w(\cdot | s)} [ Q_i^{\pi^*_w}(s,a) ] \mathbb{E}_{a \sim \pi^*_w(\cdot | s)} [ Q_j^{\pi^*_w}(s,a) ] \}$. 
    This formulation holds for each $1 \leq i \leq K$. 
    
    Therefore, we construct a $p \times K \times K$ tensor, say $B^{\pi^*_w}$, by stacking $\{E_i^{\pi^*_w}\}_{i}$ along the new (third) dimension. Then along the first dimension of size $p$, for each $s$, let $B^{\pi^*_w}(s) \in \mathbb{R}^{K \times K}$ be the corresponding slice of $B$. Let $Q^{\pi^*_w}(s,a) = [Q_1^{\pi^*_w}(s,a), \cdots, Q_K^{\pi^*_w}(s,a)]^\top \in \mathbb{R}^K$ be the action-value function evaluated with $\pi^*_w$ for vector reward $r$. Then we have
    
    {\small
    \begin{equation}
        B^{\pi^*_w}(s) = \mathbb{E}_{a \sim \pi^*_w(\cdot|s)} \bigg[ ( Q^{\pi^*_w}(s,a) - \mathbb{E}_{a' \sim \pi^*_w(\cdot|s)} [Q^{\pi^*_w}(s,a')] ) ( Q^{\pi^*_w}(s,a) - \mathbb{E}_{a' \sim \pi^*_w(\cdot|s)} [Q^{\pi^*_w}(s,a')] )^\top    \bigg]
    \end{equation}
    }
    
    which is the covariance matrix of $Q^{\pi^*_w}(s,\cdot)$ over the probability distribution $\pi^*_w(\cdot|s)$. Let $s_k$ correspond to the $k$-th row of $T^{\pi^*_w} ~ (1 \leq k \leq p)$. Then we have the following Hessian formulation for $s_k$:
    \begin{equation}
        H[v^*_{w}(s_k)] = \frac{1}{\beta} \sum_{l=1}^p [(I_p - \gamma T^{\pi^*_w})^{-1}]_{kl} B^{\pi^*_w}(s_l).
    \end{equation}
    
    For the case of $L \geq 1$, the only difference is that $\pi^*_{w}$ is changed to 
    \begin{equation}
        \pi^*_{u,w}(a | s) = \frac{ \exp [ \frac{1}{\beta} \{ \sum_{l=1}^L u_l c^{(l)}(s,a) + \sum_{k=1}^K w_k r^{(k)}(s,a) + \gamma \hspace{-0.3em}\sum_{s'}\hspace{-0.2em}T(s'|s,a) v^{*}_{u,w}(s') \} ]}{\sum_{a'} \exp [ \frac{1}{\beta} \{ \sum_{l=1}^L u_l c^{(l)}(s,a') + \sum_{k=1}^K w_k r^{(k)}(s,a') + \gamma \hspace{-0.3em}\sum_{s'}\hspace{-0.2em}T(s'|s,a') v^{*}_{u,w}(s') \} ]}
    \end{equation} 
    where $v^{*}_{u, w}$ is the fixed point of the operator $\mathcal{T}_{u,w}$: 
    \begin{equation} 
        \forall s, ~ [\mathcal{T}_{u,w} v](s) = \beta \log \sum_a \exp [ \frac{1}{\beta} \{ \sum_{l=1}^L u_l c^{(l)}(s,a) + \sum_{k=1}^K w_k r^{(k)}(s,a) + \gamma \hspace{-0.3em}\sum_{s'}\hspace{-0.2em}T(s'|s,a) v(s') \} ]
    \end{equation}
    and the size of $B^{\pi^*_{u,w}}(s)$ is $(L+K) \times (L+K)$, not $K \times K$, defined by $Q^{\pi^*_{u,w}}(s,a) \in \mathbb{R}^{L+K}$ which is the action-value function evaluated with $\pi^*_{u,w}$ for the concatenated vector function of constrained reward $c$ and unconstrained reward $r$. Finally, we have
    \begin{equation}
        H[v^*_{u,w}(s_k)] = \frac{1}{\beta} \sum_{l=1}^p [(I_p - \gamma T^{\pi^*_{u,w}})^{-1}]_{kl} B^{\pi^*_{u,w}}(s_l).
    \end{equation}
\end{proof}

\newpage
\section{Convergence Analysis} \label{append:convergence_algo}

\subsection{Assumption for Action-value Nondegeneracy} \label{append:convergence_algo_assumption}

\textit{Assumption}
There exists at least one state $s \in \mathcal{S}$ such that $\rho^{\pi^*_{u,w}}(s) > 0$ and the centered action-value vectors $\bigl\{\,Q^{\pi^*_{u,w}}(s,a) - \mathbb{E}_{a' \sim \pi^*_{u,w}(\cdot\mid s)}[\,Q^{\pi^*_{u,w}}(s,a')\,]\;:\;a\in\mathcal{A}\bigr\}$ span $\mathbb{R}^{K+L}$.

This condition fails only in degenerate multi-objective settings when for \textit{every} state $s\in\mathcal{S}$ with $\rho^{\pi^*_{u,w}}(s) > 0$, the set $\bigl\{\,Q^{\pi^*_{u,w}}(s,a) - \mathbb{E}_{a' \sim \pi^*_{u,w}(\cdot\mid s)}[\,Q^{\pi^*_{u,w}}(s,a')\,]: a\in \mathcal{A}\bigr\}$ lies entirely within an affine subspace of dimension less than $K+L$ (e.g., the size of an action set is smaller than the number of objectives).

Then $B^{\pi^*_{u,w}}(s) = \mathbb{E}_{a \sim \pi^*_{u,w}(\cdot|s)} \bigg[ ( Q^{\pi^*_{u,w}}(s,a) - \mathbb{E}_{a' \sim \pi^*_{u,w}(\cdot|s)} [Q^{\pi^*_{u,w}}(s,a')] ) ( Q^{\pi^*_{u,w}}(s,a) - \mathbb{E}_{a' \sim \pi^*_{u,w}(\cdot|s)} [Q^{\pi^*_{u,w}}(s,a')] )^\top    \bigg] \in \mathbb{R}^{(L+K) \times (L+K)}$ is positive definite. This is because (i) $\pi^*_{u,w}(a|s) > 0$ for all $a$ from \eqref{eq:optimal_policy} (which has this favorable property that facilitates analysis), and (ii) for any $y \in \mathbb{R}^{K+L} ~ \text{with} ~ y \neq \mathbf{0}$, $y^\top B^{\pi^*_{u,w}}(s) y = \sum_{a} \pi^*_{u,w}(a|s) \bigg( y^\top ( Q^{\pi^*_{u,w}}(s,a) - \mathbb{E}_{a' \sim \pi^*_{u,w}(\cdot|s)} [Q^{\pi^*_{u,w}}(s,a')] ) \bigg)^2 > 0$ as at least one $a$ should satisfy $y^\top ( Q^{\pi^*_{u,w}}(s,a) - \mathbb{E}_{a' \sim \pi^*_{u,w}(\cdot|s)} [Q^{\pi^*_{u,w}}(s,a')] ) \neq 0$.

By Proposition \ref{thm2_twice_differentiable}, we have the Hessian of $\mathcal{L}(u,w)$ as $H[\mathcal{L}(u,w)] = \frac{1}{\beta} \sum_{l=1}^p [ \mu_0^\top(I_p - \gamma T^{\pi^*_{u,w}})^{-1}]_l B^{\pi^*_{u,w}}(s_l) = \frac{1}{\beta} \sum_s \rho^{\pi^*_{u,w}}(s) B^{\pi^*_{u,w}}(s)$ where $p = |\mathcal{S}|$ and $\rho^{\pi^*_{u,w}}(s) = \sum_{t=0}^\infty \gamma^t \text{Pr}(s_t=s | \pi^*_{u,w}, T, \mu_0)$. Therefore, $H[\mathcal{L}(u,w)]$ is positive definite under the assumption.

\subsection{Existence of $\lambda$} \label{append:existence_lambda}

Let $R = \{ \rho \in \mathbb{R}^{|\mathcal{S}| |\mathcal{A}| }_+ | \sum_{a'} \rho(s',a') = \mu_0(s') + \gamma \sum_{(s,a)} T(s' | s,a) \rho(s,a) ,~\forall s'  \}$ be the set of occupancy measures, which is convex and closed. The set $R$ is also bounded since $\rho(s,a) \leq \frac{1}{1-\gamma} ~ \forall (s,a)$ for $\gamma \in [0,1)$. Under Slater condition with strict feasibility, there exists an occupancy measure $\bar{\rho} \in R$ and positive constants of $\{ \nu_l \}_{l=1}^L$ such that
\begin{equation} 
    \sum_{(s,a)} c^{(l)}(s,a) \bar{\rho}(s,a) = J_{K+l}(\pi^{\bar{\rho}})  \geq C^{(l)} + \nu_l, ~~ l =1, \cdots, L.
\end{equation}
Although $\sum_{(s,a)} c^{(l)}(s,a)\rho(s,a)$ is linear in $\rho$, so that satisfying equality already implies Slater condition, assuming strict feasibility is a mild condition and is standard in prior constrained MDP literature \cite{muller2024truly}.

Define the Lagrangian of the primal problem as:
\begin{equation}
    \tilde{L}(\rho, u) = \min_{1 \leq k \leq K} \bigg(  \sum_{(s,a)} r^{(k)}(s,a) \rho(s,a) \bigg) + \beta \sum_{s} \mathcal{H}_\rho(s) \rho(s) + \sum_{l=1}^L u_l ( \sum_{(s,a)} c^{(l)}(s,a) \rho(s,a) - C^{(l)} )
\end{equation}
with $\rho \in R$ and $u \in \mathbb{R}^L_+$. Let the Lagrangian dual be $D(u) = \max_{\rho \in R} \tilde{L}(\rho, u)$. Let $P^* = \min_{1 \leq k \leq K} \bigg(  \sum_{(s,a)} r^{(k)}(s,a) \rho^*(s,a) \bigg) + \beta \sum_{s} \mathcal{H}_{\rho^*}(s) \rho^*(s)$ denote the optimal value of the primal problem, where $\rho^* \in R$ is an optimal solution that satisfies the $L$ constraints.

By strong duality, we have $P^* = \min_{u \in \mathbb{R}^L_+} D(u) = \tilde{L}(\rho^*, u^*)$ where $u^* \in \mathbb{R}^L_+$ is an optimal dual variable corresponding to the constraints: $\sum_{(s,a)} c^{(l)}(s,a) \rho(s,a) \geq C^{(l)}, ~ l =1, \cdots, L$. Also,
\begin{equation}
    P^* =  \max_{\rho \in R} \tilde{L}(\rho, u^*) \geq \tilde{L}(\bar{\rho}, u^*) \geq \min_{1 \leq k \leq K} \bigg(  \sum_{(s,a)} r^{(k)}(s,a) \bar{\rho}(s,a) \bigg) + \beta \sum_{s} \mathcal{H}_{\bar{\rho}}(s) \bar{\rho}(s) + \sum_{l=1}^L u_l^* \nu_l
\end{equation}
where the second inequality follows from the definition of $\{ \nu_l \}_{l=1}^L$. Thus,
\begin{equation}
    0 \leq u_l^* \leq \frac{P^* - (\min_{1 \leq k \leq K} J_k(\pi^{\bar{\rho}})  + \beta \sum_{s} \mathcal{H}_{\bar{\rho}}(s) \bar{\rho}(s))}{\nu_l} := \bar{u}_l, \forall l.
\end{equation}

Therefore, it suffices to restrict attention to the compact set $u \in \Pi_{l=1}^L [0, \bar{u}_l] =: \mathcal{C}$ instead of $\mathbb{R}^L_+$ when searching for an optimal dual variable corresponding to the constraints $\sum_{(s,a)} c^{(l)}(s,a) \rho(s,a) \geq C^{(l)}, ~ l =1, \cdots, L$.

Next, the Hessian is explicitly written as follows:
\begin{equation}
    H[\mathcal{L}(u,w)] = \frac{1}{\beta} \sum_{l=1}^p [ \mu_0^\top (I - \gamma T^{\pi^*_{u,w}})^{-1}  ]_{l} B^{\pi^*_{u,w}}(s_l).
\end{equation}

From the differentiability of $v^*_{u,w}$ established in Theorem 3.3, $v^*_{u,w}$ is continuous in $(u,w)$. By their definitions, this continuity extends to $Q^*_{u,w}$, $\pi^*_{u,w}$, each entry of $B^{\pi^*_{u,w}}$ and $H[\mathcal{L}(u,w)]$. For two real symmetric matrices  $A$ and $B$, the minimum eigenvalue $\lambda_{\text{min}}(\cdot)$ satisfies:
\begin{equation}
    |\lambda_{\text{min}}(A) - \lambda_{\text{min}}(B)| \leq \| A - B \|_2 \leq \| A - B \|_F,
\end{equation}
where $\| \cdot \|_F$ is the Frobenius norm.

Therefore, $\lambda_{\text{min}}(H[\mathcal{L}(u,w)])$ is continuous in $(u,w)$ over the compact set $\mathcal{C} \times \Delta^K$. By the extreme value theorem, there exists $\lambda = \min_{(u,w) \in \mathcal{C} \times \Delta^K} \lambda_{\text{min}}(H[\mathcal{L}(u,w)]) > 0$ (with $\lambda \leq \alpha$).

\subsection{Proof of Convergence Analysis} \label{append:convergence_algo_proof}

Let $\lambda_{\text{min}}( A )$ be the minimum eigenvalue of a real symmetric matrix $A$. For simplicity, we denote $\lambda := \lambda_{\text{min}}( H[\mathcal{L}(u,w)] )$. Then $0 < \lambda \leq \alpha$ \citep{bubeck2015convex} and $\mathcal{L}(u,w)$ is $\lambda$-strongly convex.

\textbf{Theorem \ref{thm_convergence_algo}}
Let $(u^*,w^*)$ denote an optimal solution to \eqref{eq:equivalent_objective}. For each outer-loop index $m \geq 1$ in Algorithm \ref{alg:constrained_morl}, let $Q^*_{u^m, w^m}$ denote the fixed point of \eqref{eq:sql} with $[u;w] = [u^m;w^m]$, and let $Q^m$ denote the Q-function after completing the $m$-th inner-loop update. For each $m$, assume $\| Q^m - Q^*_{u^m, w^m} \|_\infty < \epsilon$ for some $\epsilon > 0$. Then for $m \geq 1$,
\begin{equation}
    \| [u^m;w^m] - [u^*;w^*] \|_{2} \leq  (1-\frac{\lambda}{\alpha})^m \| [u^0;w^0] - [u^*;w^*] \|_{2} + \frac{\sqrt{|\mathcal{S}|}}{\lambda}  \sqrt{\sum_{i=1}^{K+L} \{ r^{(i)}_{\text{max}} \}^2} \frac{1+\gamma }{(1-\gamma)^2} \epsilon.
\end{equation}

\begin{proof}
    By the definition in \eqref{eq:optimal_policy}, we have the optimal policy $\pi^*_{u^m, w^m}(a | s) = \frac{ \exp ( \frac{1}{\beta} Q^*_{u^m, w^m}(s,a) )  }{\sum_{a'} \exp ( \frac{1}{\beta} Q^*_{u^m, w^m}(s,a') )   }$ when $(u,w) = (u^m,w^m)$. According to Theorem \ref{thm1_differentiable}, we have $\nabla_{(u,w)}\mathcal{L}(u^m,v^m) = [\sum_s \mu_0(s) v_c^{\pi^*_{u^m,v^m}}(s) - [C^{(1)},\cdots,C^{(L)}]^\top ; \sum_s \mu_0(s) v_r^{\pi^*_{u^m,v^m}}(s) ] \in \mathbb{R}^{L+K}$.

    We also have $\tilde{\nabla}_{(u,w)}\mathcal{L}(u^m,v^m) := [\sum_s \mu_0(s) v_c^{\pi^m}(s) - [C^{(1)},\cdots,C^{(L)}]^\top ; \sum_s \mu_0(s) v_r^{\pi^m}(s) ] \in \mathbb{R}^{L+K}$, an estimated gradient of $\nabla_{(u,w)}\mathcal{L}(u^m,w^m)$ using $\pi^m$ where $\pi^m(a | s) = \frac{ \exp ( \frac{1}{\beta} Q^m(s,a) )  }{\sum_{a'} \exp ( \frac{1}{\beta} Q^m(s,a') )   }$.

    Let $e_m := \tilde{\nabla}_{(u,w)}\mathcal{L}(u^m,v^m) - \nabla_{(u,w)}\mathcal{L}(u^m,w^m)$. For each $s$, let $v_{r, i}^{\pi}(s) ~ (1 \leq i \leq K)$ and $v_{c, j}^{\pi}(s) ~ (1 \leq j \leq L)$ denote the elements of the $i$-th dimension of $v_{r}^{\pi}(s) \in  \mathbb{R}^K$ and the $j$-th dimension of $v_{c}^{\pi}(s) \in \mathbb{R}^L$, respectively. Then we have
    \begin{align} \label{eq:error_eq1}
        \| e_m \|^2_2 &= \| [\sum_s \mu_0(s) (v_c^{\pi^m}(s) - v_c^{\pi^*_{u^m,v^m}}(s)) ; \sum_s \mu_0(s) (v_r^{\pi^m}(s) - v_r^{\pi^*_{u^m,v^m}}(s)) ] \|^2_2 \nonumber \\
        &= \sum_{i=1}^K \left(  \sum_s \mu_0(s) (v_{r, i}^{\pi^m}(s) - v_{r, i}^{\pi^*_{u^m, w^m}}(s))  \right)^2 + \sum_{j=1}^L \left(  \sum_s \mu_0(s) (v_{c, j}^{\pi^m}(s) - v_{c, j}^{\pi^*_{u^m, w^m}}(s))  \right)^2 \nonumber \\
        &\leq \| \mu_0 \|^2_2 \sum_s \bigg[  \sum_{i=1}^K  (v_{r, i}^{\pi^m}(s) - v_{r, i}^{\pi^*_{u^m, w^m}}(s))^2   + \sum_{j=1}^L   (v_{c, j}^{\pi^m}(s) - v_{c, j}^{\pi^*_{u^m, w^m}}(s))^2     \bigg] 
    \end{align}
    where $\| \mu_0 \|^2_2 = \sum_s (\mu_0(s))^2$ and the inequality holds by Cauchy-Schwarz.

    Since both $\pi^m$ and $\pi^*_{u^m, w^m}$ use softmax parameterization with $Q^m$ and $Q^*_{u^m, w^m}$, respectively, we have
    \begin{equation} \label{eq:softmax_1}
        \forall s, ~ |v_{r, i}^{\pi^m}(s) - v_{r, i}^{\pi^*_{u^m, w^m}}(s)| \leq \frac{(1+\gamma)r^{(i)}_{\text{max}}}{(1-\gamma)^2} \| Q^m - Q^*_{u^m, w^m} \|_\infty ~ (1 \leq i \leq K)
    \end{equation}
    and
    \begin{equation} \label{eq:softmax_2}
        \forall s, ~ |v_{c, j}^{\pi^m}(s) - v_{c, j}^{\pi^*_{u^m, w^m}}(s)| \leq \frac{(1+\gamma)r^{(K+j)}_{\text{max}}}{(1-\gamma)^2} \| Q^m - Q^*_{u^m, w^m} \|_\infty ~ (1 \leq j \leq L)
    \end{equation}
    according to the property of equation (261) in  \citet{yang24softmaxLipschitz}. Combining \eqref{eq:softmax_1}, \eqref{eq:softmax_2}, and  $\| \mu_0 \|_2 \leq 1$ with \eqref{eq:error_eq1} gives
    \begin{align}  \label{eq:error_eq_final}
        \| e_m \|_2 
        &\leq \sqrt{|\mathcal{S}|} \sqrt{\sum_{i=1}^{K+L} \{ r^{(i)}_{\text{max}} \}^2} \frac{1+\gamma }{(1-\gamma)^2} \| Q^m - Q^*_{u^m, w^m} \|_\infty \nonumber \\
        &< \sqrt{|\mathcal{S}|} \sqrt{\sum_{i=1}^{K+L} \{ r^{(i)}_{\text{max}} \}^2} \frac{1+\gamma }{(1-\gamma)^2} \epsilon.
    \end{align}

    Next, we view the projected gradient descent for each outer loop as a proximal gradient descent. We reformulate the optimization in \eqref{eq:equivalent_objective} of
    \begin{equation}
        \min_{u \in \mathbb{R}_{+}^L, w \in \Delta^K} \mathcal{L}(u,w)
    \end{equation}
    as follows:
    \begin{equation} \label{eq:proximal_conversion}
        \min_{ (u,w) \in \mathbb{R}^{L+K}} \mathcal{L}(u,w) + I_{\mathbb{R}_{+}^L \times \Delta^K}(u,w)
    \end{equation}
    where $I_{\mathbb{R}_{+}^L \times \Delta^K}(u,w)$ is the indicator function with its value 0 if $(u,w) \in \mathbb{R}_{+}^L \times \Delta^K$ and $+\infty$ otherwise. $I_{\mathbb{R}_{+}^L \times \Delta^K}$ is convex because its epigraph $\{(u,w,t_e) | t_e \geq 0, (u,w) \in \mathbb{R}_{+}^L \times \Delta^K \}$ is convex. We note that according to Theorem \ref{thm4_smooth}, the smoothness of $\mathcal{L}(u,w)$ is satisfied on $\mathbb{R}^{L+K}$, which makes \eqref{eq:proximal_conversion} valid. We also note that we computed the smoothness coefficient $\alpha = \frac{1}{\beta (1-\gamma)} \sum_{i=1}^{K+L} \left( \frac{r^{(i)}_{\text{max}}}{1-\gamma} \right)^2$ of $\mathcal{L}$ in Appendix \ref{append:thm_smooth}.
    
    Applying the error bound in \eqref{eq:error_eq_final} to the analysis of inexact proximal gradient method \citep{schmidt11inexactproximalgradient}, we have 
    \begin{align}
        \| [u^m;w^m] - [u^*;w^*] \|_{2} &\leq (1-\frac{\lambda}{\alpha})^m \| [u^0;w^0] - [u^*;w^*] \|_{2} + \frac{1}{\alpha} \sum_{i=1}^m (1-\frac{\lambda}{\alpha})^{m-i} \| e_i \|_2 \nonumber \\
        &\leq  (1-\frac{\lambda}{\alpha})^m \| [u^0;w^0] - [u^*;w^*] \|_{2} + \frac{\sqrt{|\mathcal{S}|}}{\lambda}  \sqrt{\sum_{i=1}^{K+L} \{ r^{(i)}_{\text{max}} \}^2} \frac{1+\gamma }{(1-\gamma)^2} \epsilon.
    \end{align}
  
    This is achieved because we use the convex optimization method from \citet{DBLP:journals/corr/WangC13a} for projection onto the simplex $\Delta^K$, and apply non-negativity clipping for projection onto $\mathbb{R}_+^L$, both of them induce zero error in each phase of proximal objective update as it is fully deterministic and avoids randomized procedures.
    
    It remains to check whether $I_{\mathbb{R}_{+}^L \times \Delta^K}$ in \eqref{eq:proximal_conversion} is a lower semi-continuous proper convex function \citep{schmidt11inexactproximalgradient}. $I_{\mathbb{R}_{+}^L \times \Delta^K}$ is lower semi-continuous because $\mathbb{R}_{+}^L \times \Delta^K$ is closed, and it is also proper convex since $I_{\mathbb{R}_{+}^L \times \Delta^K}$ never attains $-\infty$ and $\mathbb{R}_{+}^L \times \Delta^K$ is non-empty.

\end{proof}

\subsection{Convergence Analysis for Degenerate Case} \label{append:convergence_algo_proof_degenerate}

\begin{theorem}\label{thm_convergence_algo_degenerate}
    Let $(u^*,w^*)$ denote an optimal solution to \eqref{eq:equivalent_objective}. For each outer-loop index $m \geq 1$ in Algorithm \ref{alg:constrained_morl}, let $Q^*_{u^m, w^m}$ denote the fixed point of \eqref{eq:sql} with $[u;w] = [u^m;w^m]$, and let $Q^m$ denote the Q-function after completing the $m$-th inner-loop update. For each $m$, assume $\| Q^m - Q^*_{u^m, w^m} \|_\infty < \epsilon_m$ for some $\epsilon_m > 0$. Then for $m \geq 1$,
    \begin{equation}
        \mathcal{L}( \frac{1}{m} \sum_{i=1}^m (u^i, w^i) ) - \mathcal{L}(u^*,w^*) \leq \frac{\alpha}{2m} ( \| [u^0;w^0] - [u^*;w^*] \|_{2} ~ + \frac{2 M}{\alpha} \sum_{i=1}^m \epsilon_i  )^2
    \end{equation}
    where $M = \sqrt{|\mathcal{S}|} \sqrt{\sum_{j=1}^{K+L} \{ r^{(j)}_{\text{max}} \}^2} \frac{1+\gamma }{(1-\gamma)^2}$.
\end{theorem}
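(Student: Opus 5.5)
Without the Assumption, $\mathcal{L}$ is still convex (Proposition~\ref{prop_1}, via the positive semidefinite Hessian of Proposition~\ref{thm2_twice_differentiable}) and $\alpha$-smooth on all of $\mathbb{R}^{L+K}$ (Theorem~\ref{thm4_smooth}), but no longer strongly convex. The plan is therefore to treat Algorithm~\ref{alg:constrained_morl} as an inexact proximal gradient method for the composite problem \eqref{eq:proximal_conversion}, $\min_{(u,w)} \mathcal{L}(u,w) + I_{\mathbb{R}_+^L\times\Delta^K}(u,w)$. We take step size $1/\alpha$ and gradient error $e_m := \tilde{\nabla}\mathcal{L}(u^m,w^m) - \nabla \mathcal{L}(u^m,w^m)$. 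We then invoke the basic (non-accelerated, merely convex) result of \citet{schmidt11inexactproximalgradient} in place of the strongly convex one used in Appendix~\ref{append:convergence_algo_proof}.

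The first step is to bound the gradient error exactly as before. Using Theorem~\ref{thm1_differentiable}, the gradient is a vector of $\mu_0$-averaged value functions of $\pi^*_{u^m,w^m}$, and the estimate is the same vector for the softmax policy $\pi^m$. Applying Cauchy--Schwarz with $\|\mu_0\|_2\le 1$ and the softmax-Lipschitz bound of \citet{yang24softmaxLipschitz} componentwise gives $\|e_m\|_2 \le M \|Q^m - Q^*_{u^m,w^m}\|_\infty < M\epsilon_m$. This reuses \eqref{eq:error_eq_final} verbatim, only with the per-iteration tolerance $\epsilon_m$.

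Next we verify the hypotheses of the inexact proximal gradient theorem. We need $\mathcal{L}$ convex with $\alpha$-Lipschitz gradient, and the indicator lower semicontinuous, proper and convex; all of this is checked at the end of Appendix~\ref{append:convergence_algo_proof}. The proximal step must be exact, and it is, since it is the Euclidean projection $\mathcal{P}_{K,L}$, computed exactly by clipping together with the method of \citet{DBLP:journals/corr/WangC13a}. With zero proximal error, Proposition~1 of \citet{schmidt11inexactproximalgradient} gives
\[
\mathcal{L}\Big(\tfrac1m\sum_{i=1}^m (u^i,w^i)\Big)-\mathcal{L}(u^*,w^*) \le \frac{\alpha}{2m}\Big(\|[u^0;w^0]-[u^*;w^*]\|_2 + \frac{2}{\alpha}\sum_{i=1}^m\|e_i\|_2\Big)^2 .
\]
This follows from convexity (Jensen) applied to the averaged iterate. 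Substituting $\|e_i\|_2\le M\epsilon_i$ yields the claim.

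The main obstacle is matching the indexing and constants of that external result. One has to confirm that the averaged iterate there is over $x_1,\dots,x_m$ and that the error term appears with coefficient $2/\alpha$ when the proximal error is zero. It must also be checked that the inexactness of Schmidt et al.\ refers to the gradient evaluated at the current iterate $(u^{i-1},w^{i-1})$ used to produce $(u^i,w^i)$, which corresponds to the paper's indexing up to a shift. If that proposition's form does not match directly, the fallback is to reprove it. One would use the descent lemma with $\alpha$-smoothness, the three-point property of the projection with the perturbed gradient, and telescoping of $\|x^i-x^*\|^2$. The resulting recursion $\|x^i-x^*\|\le \|x^0-x^*\|+\frac{2}{\alpha}\sum_{j\le i}\|e_j\|$ would then be bounded inductively, which is the standard argument.
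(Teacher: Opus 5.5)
Your proposal is correct and follows essentially the same route as the paper: it reuses the gradient-error bound $\|e_i\|_2 < M\epsilon_i$ from the strongly convex case and plugs it, with an exact projection step, into the merely convex inexact proximal-gradient bound of \citet{schmidt11inexactproximalgradient}. You also flag that the error in that result is evaluated at $(u^{i-1},w^{i-1})$, which is off by one from the paper's $\epsilon_i$; this is a fair point that the paper's proof glosses over.
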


\begin{proof}
Using an analysis of inexact proximal gradient method \citep{schmidt11inexactproximalgradient} using the same logic in the proof of Theorem \ref{thm_convergence_algo} (Appendix \ref{append:convergence_algo_proof}), we have 

\begin{equation}
    \mathcal{L}( \frac{1}{m} \sum_{i=1}^m (u^i, w^i) ) - \mathcal{L}(u^*,w^*) \leq \frac{\alpha}{2m} ( \| [u^0;w^0] - [u^*;w^*] \|_{2} ~ + \frac{2}{\alpha} \sum_{i=1}^m \| e_i \|_2   )^2
\end{equation}
where $e_i := \tilde{\nabla}_{(u,w)}\mathcal{L}(u^i,w^i) - \nabla_{(u,w)}\mathcal{L}(u^i,w^i)$ is the $i$-th gradient error and
\begin{equation}
    \| e_i \|_2 < \sqrt{|\mathcal{S}|} \sqrt{\sum_{j=1}^{K+L} \{ r^{(j)}_{\text{max}} \}^2} \frac{1+\gamma }{(1-\gamma)^2} \epsilon_i = M \epsilon_i
\end{equation}
from \eqref{eq:error_eq_final}.
\end{proof}

We note that the error of $ \mathcal{L}( \frac{1}{m} \sum_{i=1}^m (u^i, w^i) ) - \mathcal{L}(u^*,w^*)$ decreases at rate $O(\frac{1}{m})$ when $\{\epsilon_i\}_{i=1}^\infty$ is summable (e.g., $\epsilon_m = O(\frac{1}{m^{1+\delta}})$ with $\delta > 0$).

\newpage
\section{Experimental Details: Tabular Settings} \label{append:tabular}

When generating structured MOMDPs randomly, we first verify whether the generated instances are feasible. To do this, We first consider the following unregularized convex optimization:
\begin{equation} \label{eq:optimal_obj}
    \max_{\rho \geq 0} \min_{1 \leq k \leq K} \bigg(  \sum_{(s,a)} r^{(k)}(s,a) \rho(s,a) \bigg)
\end{equation}
\begin{equation} \label{eq:optimal_flow}
    \sum_{a'} \rho(s',a') = \mu_0(s') + \gamma \sum_{(s,a)} T(s' | s,a) \rho(s,a) ,~\forall s'
\end{equation}
\begin{equation} \label{eq:optimal_const_c}
    \sum_{(s,a)} c^{(l)}(s,a) \rho(s,a) \geq C^{(l)}, ~~ l =1, \cdots, L
\end{equation}
which is equivalently expressed as the following LP by using additional scalar variable $\tilde{c} \in \mathbb{R}$:
\begin{equation} \label{eq:eqiv_optimal_obj}
    \max_{\rho \geq 0, \tilde{c}} \tilde{c}
\end{equation}
\begin{equation} \label{eq:eqiv_optimal_flow}
    \sum_{a'} \rho(s',a') = \mu_0(s') + \gamma \sum_{(s,a)} T(s' | s,a) \rho(s,a) ,~\forall s'
\end{equation}
\begin{equation} \label{eq:eqiv_optimal_const_r}
    \sum_{(s,a)} r^{(k)}(s,a) \rho(s,a) \geq \tilde{c}, ~~ k =1, \cdots, K,
\end{equation}
\begin{equation} \label{eq:eqiv_optimal_const_c}
    \sum_{(s,a)} c^{(l)}(s,a) \rho(s,a) \geq C^{(l)}, ~~ l =1, \cdots, L.
\end{equation}

We want to generate $\mu_0, T, r$, and $c$ in structured MOMDPs to satisfy feasibility and Slater condition by solving the following LP using the pywraplp function from the OR-Tools library:

\begin{equation} \label{eq:feasible_obj}
    \max_{\rho \geq \epsilon_{\text{low}}} ~ 0
\end{equation}
\begin{equation} \label{eq:feasible_flow}
    \sum_{a'} \rho(s',a') = \mu_0(s') + \gamma \sum_{(s,a)} T(s' | s,a) \rho(s,a) ,~\forall s'
\end{equation}
\begin{equation} \label{eq:feasible_const_r}
    \sum_{(s,a)} r^{(k)}(s,a) \rho(s,a) \geq \tilde{c} + \epsilon_{\text{low}}, ~~ k =1, \cdots, K,
\end{equation}
\begin{equation} \label{eq:feasible_const_c}
    \sum_{(s,a)} c^{(l)}(s,a) \rho(s,a) \geq C^{(l)} + \epsilon_{\text{low}}, ~~ l =1, \cdots, L
\end{equation}

where $\epsilon_{\text{low}}$ is used to guarantee the strict feasibility for Slater condition, and we set $\epsilon_{\text{low}}=10^{-4}$. If the LP solver does not find a feasible solution, we regenerate the constrained MOMDP until a feasible instance is found. Once any feasible solution is found, we solve the LP of \eqref{eq:eqiv_optimal_obj}, \eqref{eq:eqiv_optimal_flow}, \eqref{eq:eqiv_optimal_const_r}, and \eqref{eq:eqiv_optimal_const_c} by using LP solver to acquire the optimal max-min value $\tilde{c}^*$. 

Each algorithm is updated iteratively until the maximum change in the $Q$-function between successive iterations falls below $10^{-4}$. We use the following settings: $\gamma=0.8$, $l_{\text{w}}=0.001$, and $\text{ITER}=3000$. For each algorithm, we independently disable the learning of $u$, of $w$, and of both $(u,w)$, initializing $u$ to the zero vector and $w=[1/K, \cdots, 1/K] \in \Delta^K$. Each algorithm is evaluated using three random seeds for each constrained MOMDP setting, resulting in six runs when averaged across the MOMDP classes. All experiments are conducted on an Intel Core i9-10900X CPU @ 3.70GHz.

\section{Experimental Details: Applications} \label{append:resource_allocation}

\subsection{Implementation of Our Algorithm for Applications} \label{append:realloc_implement}

We now leverage the usage of neural network for our algorithm. If we differentiate the both side of $v^{*}_{u, w}(s) = [\mathcal{T}_{u,w} v^{*}_{u, w}](s)$ w.r.t. $u$ and $w$ for all $s$, then we have the following formula:
\begin{equation} \label{eq:grad_w_v_star}
    \forall s, ~ \nabla_w v^{*}_{u, w}(s) = \sum_a \pi^*_{u,w}(a | s)  \bigg( r(s,a) + \gamma \hspace{-0.3em}\sum_{s'}\hspace{-0.2em}T(s'|s,a) \nabla_w v^{*}_{u, w}(s') \bigg).
\end{equation}

\begin{equation} \label{eq:grad_u_v_star}
    \forall s, ~ \nabla_u v^{*}_{u, w}(s) = \sum_a \pi^*_{u,w}(a | s)  \bigg( c(s,a) + \gamma \hspace{-0.3em}\sum_{s'}\hspace{-0.2em}T(s'|s,a) \nabla_u v^{*}_{u, w}(s') \bigg).
\end{equation}

Here, $\pi^*_{u,w}(a | s)$ is defined as in \eqref{eq:optimal_policy}. To ensure stable gradient estimation in continuous state spaces, we parameterize a gradient network to estimate $\nabla_u v^{*}_{u, w}(s)$ and $\nabla_w v^{*}_{u, w}(s)$. For continuous action environments, we also employ an actor network $\pi_\theta$ and implement Algorithm \ref{alg:ours_realloc}. To further stabilize the estimation of the gradient, we add an additional linear layer after the penultimate layer of the actor network $\pi_\theta$, and use its $(L+K)$-dimensional output as the gradient network $g_\theta(s)$. We use the notation $g_\theta$ to indicate that the actor network and the gradient network share parameters and jointly update their lower-layer weights.

\begin{algorithm}[!ht]  
\caption{Proposed Constrained Max-min Algorithm for Continuous Action} 
\begin{algorithmic}[1] \label{alg:ours_realloc}
    \STATE $\pi_\theta$: actor,  $Q_{\phi}$: critic, $Q_{\overline{\phi}}$: target critic, $g_{\theta}$: gradient network, $g_{\overline{\theta}}$: target gradient network, $\mathcal{D}$: replay buffer, $T_{\text{init}}$: initial iteration number, $\tau$: target update ratio, $U$: main iteration number, $U_{s}$: gradient step for critic update, $l_g$: learning rate of the gradient network, $l_0$: initial learning rate of the weight $(u,w)$, $K$: unconstrained reward dimension, $L$: the number of constraints, $C_{th} \in  \mathbb{R}^L$: threshold vector for the constraints
    \STATE Initialize target critic $\overline{\phi} \leftarrow \phi$, target gradient network $\bar{\theta} \leftarrow \theta$, and weights $u^0 \in \mathbb{R}^L_+$, $w^0 \in \Delta^K$.
    \FOR{$j=0, \cdots, T_{\text{init}}-1$} 
        \STATE Rollout sample from $\pi_\theta$ and save it in $\mathcal{D}$. Sample a batch of data $\mathcal{B} \subset \mathcal{D}$. 
        \STATE $Q_\phi$ $\leftarrow$ \textbf{Critic Update}($Q_\phi$, $Q_{\overline{\phi}}$, $\pi_\theta$, $(u^0, w^0)$, $\mathcal{B}$) (Algorithm \ref{alg:sql_c})
        \STATE Update target critic parameter $\overline{\phi} \leftarrow \tau \phi +(1-\tau)\overline{\phi}$.
        \STATE $\pi_\theta$ $\leftarrow$ \textbf{Actor Update}($Q_\phi$, $\pi_\theta, \mathcal{D}$) (Algorithm \ref{alg:sql_actor_c})
    \ENDFOR
    \FOR{$m = 0, 1, 2, \cdots, U-1$}
        \STATE Rollout sample from $\pi_\theta$ and save $(s,a,r,c,s',\pi_{\theta_{\text{old}}}(a|s))$ in $\mathcal{D}$ where $\pi_{\theta_{\text{old}}}(a|s) = \pi_\theta(a|s)$.
        \STATE Update the gradient network $g_\theta$ as follows: \\
        $\theta \leftarrow \theta - l_g \nabla_{\theta} \mathbb{E}_{(s,a,r,c,s',\pi_{\theta_{\text{old}}}(a|s)) \sim \mathcal{D}}   \left[ \left\| \frac{ \pi_{\theta_{m}}(a|s)   }{ \pi_{\theta_{\text{old}}}(a|s) } ( [c; r] + \gamma  g_{\overline{\theta}}(s')  ) - g_\theta(s) \right\|^2 \right]$ \\
        where $\theta_m$ is a frozen copy of the current parameter $\theta$.
        \STATE Update target gradient network parameter $\overline{\theta} \leftarrow \tau \theta +(1-\tau)\overline{\theta}$.
        \STATE Update $(u,w)=(u^m, w^m)$ using the projected gradient descent:
        \begin{equation*} 
            (u^{m+1}, w^{m+1}) =   \mathcal{P}_{K,L} \left[(u^{m}, w^{m}) - l_m  (\mathbb{E}_{s \sim \mu_0}[g_\theta(s)] - [C_{th}; \mathbf{0}_K])  \right].
        \end{equation*}
        \STATE Schedule current learning rate of the weight $l_m$.
        \FOR{$j=0, \cdots, U_s-1$} 
            \STATE Sample a batch of data $\mathcal{B} \subset \mathcal{D}$. 
            \STATE $Q_\phi$ $\leftarrow$ \textbf{Critic Update}($Q_\phi$, $Q_{\overline{\phi}}$, $\pi_\theta$, $(u^{m+1}, w^{m+1})$, $\mathcal{B}$)
        \ENDFOR
        \STATE Update target critic parameter $\overline{\phi} \leftarrow \tau \phi +(1-\tau)\overline{\phi}$.
        \STATE $\pi_\theta$ $\leftarrow$ \textbf{Actor Update}($Q_\phi$, $\pi_\theta, \mathcal{D}$)
    \ENDFOR
    \STATE Return $\pi_{\theta}$.
\end{algorithmic}
\end{algorithm}

\begin{algorithm}[!ht] 
\caption{Critic Update} \label{alg:sql_c}
\begin{algorithmic}[2]
    \STATE \textbf{Input}: critic $Q_\phi$, target critic $Q_{\overline{\phi}}$, actor $\pi_\theta$,  weight $(u,w)$, sample batch $\mathcal{B}$
    \STATE{Update the critic parameter $\phi$ as follows}:
    \begin{equation}
        \begin{split}
        \phi \leftarrow \phi - l_c \nabla_{\phi} \frac{1}{|\mathcal{B}|} \sum_{(s,a,r,s') \in \mathcal{B}} \bigg( 
        \sum_{k=1}^K w_k r^{(k)}(s,a) + \sum_{l=1}^L u_l c^{(l)}(s,a) \\
        + \gamma \beta \log \mathbb{E}_{a' \sim \pi_{\theta}} \left[ 
        \frac{ \exp \left( Q_{\overline{\phi}}(s',a') / \beta \right) }{\pi_{\theta}(a'|s')} 
        \right] - Q_{\phi}(s,a) \bigg)^2
        \end{split}
    \end{equation}
    where $l_c$ is a critic learning rate.
    \STATE \textbf{Output}: Updated critic $Q_\phi$ 
\end{algorithmic}
\end{algorithm}

\begin{algorithm}[!ht] 
\caption{Actor Update} \label{alg:sql_actor_c}
\begin{algorithmic}[3]
    \STATE \textbf{Input}: critic $Q_\phi$, actor $\pi_\theta$, replay buffer $\mathcal{D}$
    \STATE{Sample a batch of data $\mathcal{B} \subset \mathcal{D}$ and find the actor satisfying the following:}
    \begin{equation}
        \theta \leftarrow \arg \min_{\theta} \mathbb{E}_{s \sim \mathcal{B}} \mathbb{E}_{a \sim \pi_\theta(\cdot | s)}  \left[ \beta \log \pi_\theta(a|s) - Q_{\phi}(s, a)    \right].
    \end{equation} 
    \STATE \textbf{Output}: Updated actor $\pi_\theta$ 
\end{algorithmic}
\end{algorithm}

\subsection{Unconstrained Max-min MORL Algorithm } \label{append:modify_maxmin}

\begin{algorithm}[H]  
\caption{Max-min GS for Continuous Action (Our modification from \citet{park24maxmin}) } 
\begin{algorithmic}[1]
    \STATE $\pi_\theta$: actor,  $Q_{\phi}$: critic, $Q_{\overline{\phi}}$: target critic, $\mathcal{D}$: replay buffer, $T_{\text{init}}$: initial iteration number, $\tau$: target update ratio, $U$: main iteration number, $U_{s}$: gradient step for critic update, $N_{s}$: number of perturbed samples, $\mu$: perturbation parameter, $l_0$: initial learning rate of the weight $w$, $K$: reward dimension
    \STATE Initialize target critic $\overline{\phi} \leftarrow \phi$ and weight $w^0 \in \Delta^K$.

    \FOR{$j=0, \cdots, T_{\text{init}}-1$} 
        \STATE Rollout sample from $\pi_\theta$ and save it in $\mathcal{D}$. Sample a batch of data $\mathcal{B} \subset \mathcal{D}$. 
        \STATE $Q_\phi$ $\leftarrow$ \textbf{Critic Update}($Q_\phi$, $Q_{\overline{\phi}}$, $\pi_\theta$, $w^0$, $\mathcal{B}$) (Algorithm \ref{alg:sql_c} without the term of $\sum_{l=1}^L u_l c^{(l)}(s,a)$)
        \STATE Update target critic parameter $\overline{\phi} \leftarrow \tau \phi +(1-\tau)\overline{\phi}$.
        \STATE $\pi_\theta$ $\leftarrow$ \textbf{Actor Update}($Q_\phi$, $\pi_\theta, \mathcal{D}$) (Algorithm \ref{alg:sql_actor_c})
    \ENDFOR
    \FOR{$m = 0, 1, 2, \cdots, U-1$}
        \STATE Rollout sample from $\pi_\theta$ and save it in $\mathcal{D}$.
        \STATE Generate $N_{s}$ perturbed weights $\{ w^m + \mu u^{m}_n \}_{n=1}^{N_{s}}$, $u^{m}_n \sim \mathcal{N}(0, I_K)$.
        \STATE Make $N_{s}$ copies of $Q_\phi: \{ \hat{Q}_{\phi, \text{copy}, n} \}_{n=1}^{N_{s}}$. Sample a common batch of data $\mathcal{B}_c \subset \mathcal{D}$.
        \FOR{$n=1,\cdots,N_s$}
            \STATE $\hat{Q}_{w^m + \mu u^{m}_n, \text{copy}, n}$ $\leftarrow$ \textbf{Critic Update}($\hat{Q}_{\phi, \text{copy}, n}$, $Q_{\overline{\phi}}$, $\pi_\theta$, $w^m + \mu u^{m}_n$, $\mathcal{B}_c$)
        \ENDFOR
        \STATE Calculate $\hat{L}(w^m + \mu u^{m}_n) = \mathbb{E}_{s \sim \mu_0}  \bigg[ \beta \log  \mathbb{E}_{a \sim \pi_{\theta}} \left[ \frac{ \exp [\hat{Q}_{w^m + \mu u^{m}_n, \text{copy}, n}(s,a) / \beta] }{\pi_{\theta}(a|s)} \right]    \bigg]$. 
        \STATE Conduct linear regression using $\{w^m + \mu u^{m}_n, \hat{L}(w^m + \mu u^{m}_n)  \}_{n=1}^{N_{s}}$ and calculate the linear weight $a_m$. Discard $\{ \hat{Q}_{w^m + \mu u^{m}_n, \text{copy}, n} \}_{n=1}^{N_{s}}$.
        \STATE Update $w=w^m$ using the projected gradient descent:
        \begin{equation*} 
            w^{m+1} =   \text{proj}_{ \Delta^K } \left(w^{m} - l_m  a_m  \right).
        \end{equation*}
        \STATE Schedule current learning rate of the weight $l_m$.
        \FOR{$j=0, \cdots, U_s-1$} 
            \STATE Sample a batch of data $\mathcal{B} \subset \mathcal{D}$. 
            \STATE $Q_\phi$ $\leftarrow$ \textbf{Critic Update}($Q_\phi$, $Q_{\overline{\phi}}$, $\pi_\theta$, $w^{m+1}$, $\mathcal{B}$)
        \ENDFOR
        \STATE Update target critic parameter $\overline{\phi} \leftarrow \tau \phi +(1-\tau)\overline{\phi}$.
        \STATE $\pi_\theta$ $\leftarrow$ \textbf{Actor Update}($Q_\phi$, $\pi_\theta, \mathcal{D}$)
    \ENDFOR
    \STATE Return $\pi_{\theta}$.
\end{algorithmic}
\end{algorithm}

\clearpage
\subsection{{Building Thermal Control}} \label{append:building_overview}
\begin{table}[!ht]
    \centering
    \caption{Hyperparameters for algorithms in building thermal control environment}
    \label{tab:hyperparameters_comparison_build}
    \begin{tabular}{ll}
    \toprule
    \textbf{Parameter} & \textbf{Value} \\
    \midrule
    \multicolumn{2}{l}{\textbf{Shared}} \\
    \midrule
    optimizer & Adam (Kingma \& Ba, 2015) \\
    discount (\(\gamma\)) & 0.99 \\
    target smoothing ratio (\(\tau\)) & 0.001 \\
    reward dimension & 3 \\
    max episode step & 480 \\
    replay buffer size & \(8\times 10^5\) \\
    hidden units per layer & 64 \\
    minibatch size & 32 \\
    activation function & ReLU \\
    entropy coefficient & 0.05 \\
    weight learning rate & \(1 \times 10^{-6}\) \\
    weight scheduling & \(1/\sqrt{t}\) \\
    \midrule
    \multicolumn{2}{l}{\textbf{Constrained Max-min MORL (Ours)}} \\
    \midrule
    constraint type & maximize \\
    constraint dimension & 1 \\
    constraint epsilon & 1.0 \\
    constraint threshold & \(-180\) \\
    main learning rate & \(7.5 \times 10^{-4}\) \\
    gradient steps for critic update & 3 \\
    gradient estimation learning rate & \(5 \times 10^{-5}\)\\
    gradient estimation steps & 1 \\
    gradient target smoothing ratio & 0.001 \\
    \midrule
    \multicolumn{2}{l}{\textbf{Unconstrained Max-min MORL (Gaussian) }} \\ 
    \midrule
    main learning rate & \(7.5 \times 10^{-4}\) \\
    perturbation \(q\) learning rate & 0.073 \\
    perturbation gradient steps & 1 \\
    gradient steps for critic update & 3 \\
    perturbation \(q\)-copies & 10 \\
    perturbation noise std-dev & 0.01 \\
    \midrule
    \multicolumn{2}{l}{\textbf{Unconstrained Max-min MORL (ARAM)}} \\
    \midrule
    main learning rate & \(7.5 \times 10^{-4}\) \\
    CI coefficient $\eta$ & 0.2 \\
    MD coefficient $\lambda$ & 0.03 \\
    \midrule    
    \multicolumn{2}{l}{\textbf{Max-average SAC with a Lagrangian Relaxation}} \\
    \midrule
    constraint type & minimize \\
    initial lambda & 1.0 \\
    main learning rate (actor/critic) & \(3 \times 10^{-4}\) \\
    constraint threshold & 180 \\
    entropy coefficient & 0.05 \\
    lambda learning rate & 0.005 \\
    \midrule
    \multicolumn{2}{l}{\textbf{Unconstrained Max-average SAC}} \\
    \midrule
    main learning rate (actor/critic) & \(3 \times 10^{-4}\) \\
    \bottomrule
    \end{tabular}
\end{table}

{We use a building thermal control environment from SustainGym \citep{yeh2023sustaingym}, which simulates a commercial building with $N_{\text{zone}}=23$ thermal zones. We group these zones into three sections (bottom, middle, and top) and define a comfort objective for each section. Accordingly, in our experiments we set $K=3$ and $L=1$, i.e., three reward objectives and a single cost constraint.}

{At each timestep $t$, the agent observes a normalized continuous state $s_t \in \mathbb{R}^{29}$. The underlying (unnormalized) state is constructed by concatenating: (i) the temperatures of all 23 zones, (ii) outdoor temperature, (iii) global horizontal irradiance (GHI), (iv) ground temperature, (v) occupancy power, (vi) carbon intensity, and (vii) electricity price. The environment clips each component to predefined bounds and linearly rescales it to $[0,1]$ before returning $s_t$.}

{The action is a continuous vector $a_t \in \mathbb{R}^{23}$ with each component bounded in $[-1,1]$, representing zone-level heating/cooling power commands (cooling is negative and heating is positive). The reward is a $K$-dimensional vector $\mathbf{r}_t \in \mathbb{R}^{3}$ that measures comfort for three zone groups. The 23 zones are partitioned into bottom (zones 1--9), middle (zones 10--16), and top (zones 17--23), and each reward entry increases when the temperatures in the corresponding group stay closer to the target temperatures. We define a single scalar energy cost signal $c_t$ from the aggregated magnitude of the zone-level power commands, specifically
$c_t = \|a_t\|_1 / s$, where we tune the scaling factor $s$ (set to $s=5$) to match the reward scale, and use $c_t$ as the cost for the energy constraint ($L=1$).}

{Each episode consists of 480 timesteps. The total training budget is $8\times 10^{5}$ timesteps, with evaluations conducted once every two training episodes. During each evaluation, one episode is run and the cumulative discounted sums of the $(L+K)$-dimensional vector are computed. These experiments were conducted using an NVIDIA RTX 4000 SFF Ada GPU (20GB) across five random seeds. }

\subsection{Effect of $\beta$ and Wall-Clock Time Comparison} \label{append:building_ablation}

\begin{table}[h]
\centering
\caption{Performance comparison with different values of $\beta$ in Building environment ($C_{th}=180$).}
\label{tab:beta_building}
\begin{tabular}{crr}
\toprule
$\beta$ & \begin{tabular}[c]{@{}c@{}}Cost sum\\ ($C_{th} = 180$)\end{tabular} & \begin{tabular}[c]{@{}c@{}}Minimum\\ return ($\uparrow$)\end{tabular} \\
\midrule
0.1 & 152.6 & 625.6 \\
0.05 & 178.7 & 639.8 \\
0.01 & 269.8 & 563.6 \\
\bottomrule
\end{tabular}
\end{table}

\begin{table}[h]
    \centering
    \caption{Wall-clock time comparison per episode in Building environment}
    \label{tab:computation_cost}
    \begin{tabular}{l c}
        \toprule
        Algorithm & Training Time (s) \\
        \midrule
        Random              & $0.2$  \\
        MA-SAC              & $7.3$  \\
        MA-SAC-L            & $13.5$  \\
        Ours                & $18.4$  \\
        Max-min GS  & $48.6$  \\
        ARAM                & $1.7$  \\
        \bottomrule
    \end{tabular}
\end{table}

The entropy term is used not only to encourage exploration but also to resolve the indeterminacy issue in solving the dual of the max-min optimization problem \cite{park24maxmin}. The impact of $\beta$ is seen in Table \ref{tab:beta_building}: The max-min performance in the Building environment is higher for moderate values of $\beta$ (e.g., 0.1, 0.05) than for very small values (e.g., $\beta=0.01$). 

Table~\ref{tab:computation_cost} reports the wall-clock training time per episode for each algorithm. Incorporating constraints increases the training time from MA-SAC to MA-SAC-L. Although our algorithm requires more computation than MA-SAC-L, it achieves a better balance between max-min fairness and constraint satisfaction, as demonstrated in Table~\ref{tab:results_cost_max_queue_length_sum}. Furthermore, our algorithm requires significantly less computation than Max-min GS.

\clearpage
\subsection{Multi-Objective Locomotion Control}
\label{append:hyperparam_ant}
\begin{table}[!ht]
    \centering
    \caption{Hyperparameters for algorithms in multi-objective locomotion control environment}
    \label{tab:hyperparameters_comparison_ant}
    \begin{tabular}{ll}
    \toprule
    \textbf{Parameter} & \textbf{Value} \\
    \midrule
    \multicolumn{2}{l}{\textbf{Shared}} \\
    \midrule
    optimizer & Adam (Kingma \& Ba, 2015) \\
    discount (\(\gamma\)) & 0.99 \\
    target smoothing ratio (\(\tau\)) & 0.001 \\
    reward dimension & 2 \\
    max episode step & 1000 \\
    replay buffer size & \(1\times 10^6\) \\
    hidden units per layer & 64 \\
    minibatch size & 32 \\
    activation function & ReLU \\
    entropy coefficient & 0.05 \\
    weight learning rate & 0.001 \\
    weight scheduling & \(1/\sqrt{t}\) \\
    \midrule
    \multicolumn{2}{l}{\textbf{Constrained Max-min MORL (Ours)}} \\
    \midrule
    constraint type & maximize \\
    constraint dimension & 1 \\
    constraint epsilon & 1.0 \\
    constraint threshold & \(-50\) \\
    main learning rate & \(7.5 \times 10^{-4}\) \\
    gradient steps for critic update & 3 \\
    gradient estimation learning rate & \(2.5 \times 10^{-5}\)\\
    gradient estimation steps & 1 \\
    gradient target smoothing ratio & 0.001 \\
    \midrule
    \multicolumn{2}{l}{\textbf{Unconstrained Max-min MORL (Gaussian) }} \\ 
    \midrule
    main learning rate & \(7.5 \times 10^{-4}\) \\
    perturbation \(q\) learning rate & 0.073 \\
    perturbation gradient steps & 1 \\
    gradient steps for critic update & 3 \\
    perturbation \(q\)-copies & 10 \\
    perturbation noise std-dev & 0.01 \\
    \midrule
    \multicolumn{2}{l}{\textbf{{Unconstrained Max-min MORL (ARAM)}}} \\
    \midrule
    main learning rate & \(7.5 \times 10^{-4}\) \\
    CI coefficient $\eta$ & 0.2 \\
    MD coefficient $\lambda$ & 0.03 \\
    \midrule    
    \multicolumn{2}{l}{\textbf{Max-average SAC with a Lagrangian Relaxation}} \\
    \midrule
    constraint type & minimize \\
    initial lambda & 1.0 \\
    main learning rate (actor/critic) & \(3 \times 10^{-4}\) \\
    constraint threshold & 50 \\
    entropy coefficient & 0.05 \\
    lambda learning rate & 0.001 \\
    \midrule
    \multicolumn{2}{l}{\textbf{Unconstrained Max-average SAC}} \\
    \midrule
    main learning rate (actor/critic) & \(3 \times 10^{-4}\) \\
    \bottomrule
    \end{tabular}
\end{table}

\subsection{Evaluation with Different $C_{th}$}\label{append:ant_different_c}

\begin{table}[h]
    \centering
    \caption{MoAnt-v5 results over five seeds, with the two constraint-satisfying algorithms highlighted in \textbf{bold} ($C_{th}=40$).}
    \label{tab:threshold_40}
    \begin{tabular}{l c c}
        \toprule
        Algorithm &
        \makecell{Cost sum \\ $(C_{th}=40)$} &
        \makecell{Minimum \\ return $(\uparrow)$} \\
        \midrule
        Random     & $146.5$ & $48.2$  \\
        MA-SAC     & $275.3$ & $98.8$  \\
        MA-SAC-L   & $\mathbf{36.8}$  & $78.8$  \\
        Ours      & $\mathbf{36.5}$ & $92.9$ \\
        Max-min GS & $111.7$ & $92.7$  \\
        ARAM       & $620.7$ & $101.3$ \\
        \bottomrule
    \end{tabular}
\end{table}

We set the threshold value $C_{th} = 40$ in the MoAnt environment, which is more stringent than the value $C_{th} = 50$ used in Table \ref{tab:moantv5_results}. Our method consistently balances max-min fairness with constraint satisfaction, meeting the constraint while achieving superior max-min performance compared to MA-SAC-L.

\clearpage
\subsection{{Greenhouse-gas-emission-aware Traffic
Management}}\label{append:traffic}

\label{append:hyperparam_traffic}
\begin{table}[!ht]
    \centering
    \caption{Hyperparameters for traffic signal control environment}
    \label{tab:hyperparameters_comparison_traffic}
    \begin{tabular}{ll}
    \toprule
    \textbf{Parameter} & \textbf{Value} \\
    \midrule
    \multicolumn{2}{l}{\textbf{Shared}} \\
    \midrule
    optimizer & Adam (Kingma \& Ba, 2015) \\
    discount (\(\gamma\)) & 0.99 \\
    target smoothing ratio (\(\tau\)) & 0.001 \\
    reward dimension & 16 \\
    total seconds per episode & 9000 \\
    delta time (seconds) & 30 \\
    total timesteps & \(1\times 10^5\) \\
    replay buffer size & \(1\times 10^5\) \\
    hidden units per layer & 64 \\
    minibatch size & 32 \\
    activation function & ReLU \\
    entropy coefficient & 0.05 \\
    weight scheduling & \(1/\sqrt{t}\) \\
    \midrule
    \multicolumn{2}{l}{\textbf{Constrained Max-min MORL (Ours)}} \\
    \midrule
    constraint type & minimize \\
    constraint dimension & 1 \\
    constraint epsilon & -1.0 \\
    constraint threshold & \(7.0 \times 10^{4}\)\\
    main learning rate & \(7.5 \times 10^{-4}\) \\
    weight learning rate & 0.01 \\
    gradient steps for critic update & 3 \\
    gradient estimation learning rate & \(1.0 \times 10^{-5}\)\\
    gradient estimation steps & 1 \\
    gradient target smoothing ratio & 0.001 \\
    \midrule
    \multicolumn{2}{l}{\textbf{Unconstrained Max-min MORL (Gaussian)}} \\ 
    \midrule
    main learning rate & \(7.5 \times 10^{-4}\) \\
    weight learning rate & 0.01 \\
    perturbation \(q\) learning rate & 0.073 \\
    perturbation gradient steps & 1 \\
    gradient steps for critic update & 3 \\
    perturbation \(q\)-copies & 20 \\
    perturbation noise std-dev & 0.01 \\
    \midrule
    \multicolumn{2}{l}{\textbf{Unconstrained Max-min MORL (ARAM)}} \\
    \midrule
    main learning rate & 0.001 \\
    CI coefficient $\eta$ & 0.00202 \\
    MD coefficient $\lambda$ & 0.2 \\
    \midrule
    \multicolumn{2}{l}{\textbf{Constrained Max-average PGO}} \\
    \midrule
    main learning rate & 0.01 \\
    constraint type & minimize \\
    constraint threshold & \(7.0 \times 10^{4}\) \\
    constraint learning rate & 0.001 \\
    \midrule
    \multicolumn{2}{l}{\textbf{Unconstrained Max-average PGO}} \\
    \midrule
    main learning rate & 0.001 \\
    \bottomrule
    \end{tabular}
\end{table}

Here, $|\mathcal{A}|=4$. MA-PGO is implemented using the PPO algorithm \citep{schulman17ppo} to maximize the average reward over $K=16$ objectives. Based on MA-PGO, MA-CPGO applies clipping only to the unconstrained reward part (i.e., $w \cdot r$), while leaving the constrained reward part unclipped (i.e., $u \cdot c$) and applies the Lagrangian update, following \citet{liu2019ipointeriorpointpolicyoptimization} to improve constraint satisfaction. We found that applying clipping to $u \cdot c + w \cdot r$ does not guarantee constraint satisfaction. Each method is run for 100k timesteps per seed using five random seeds.

\section{Limitation, Future Work, and Discussion} \label{append:limitations}

In this section, we discuss several limitations of our work and related future research avenues, although our method offers a promising direction for developing constrained MORL algorithms.

First, there is a lack of well-established benchmarks for MORL compared to standard RL settings \citep{hayes22survey}, and even fewer environments are specifically designed for constrained MORL. Additionally, most existing MORL environments have low-dimensional reward spaces (typically fewer than four dimensions) \citep{park2025reward}, which limits the ability to evaluate our algorithm in high-dimensional settings. Developing practical benchmarks for both MORL and constrained MORL is therefore a critical research direction for the community.

Second, while it is common in the constrained MDP literature to assume that feasibility is ensured by appropriately chosen thresholds \citep{tessler2018reward,ha2020learning}, determining such thresholds, that is, setting the constraint set $\{ C^{(l)} \}_{l=1}^L$, is non-trivial in practice outside of simple or tabular domains. Unlike trial-and-error reward design, constraint threshold design is often infeasible or unsafe due to the potential risks and costs involved. Leveraging external sources of information, such as human demonstrations or natural language descriptions, offers a promising path for setting constraint thresholds in constrained RL and MORL. Another possible approach is to infer the constraint values from expert demonstrations, commonly referred to as inverse constrained RL \citep{pmlr-v139-malik21a,subramanian2024confidence}.

Third, while our setting clearly distinguishes rewards from costs, this distinction may be ambiguous in other domains. Determining which objectives should be treated as constraints versus unconstrained rewards can be challenging. As with constraint threshold design, incorporating external guidance could help better structure constrained MORL problems.

Fourth, several constrained RL studies have explored more conservative formulations than those based on expected cumulative cost, for example, using outage probability or quantile-based constraints to manage rare but critical failures in domains such as finance or insurance \citep{yang21wcsac,jung22quantile}. While our current framework and analysis rely on expected cumulative cost, extending it to support such conservative constraint formulations presents a valuable direction for future work.

Lastly, although we assume the convergence of the (action) value function for each weight pair $(u,w)$, it is well known that the combination of function approximation, bootstrapped updates, and off-policy learning can lead to instability and even divergence during training \citep{sutton2018reinforcement, che24deadlytrial}. A theoretical investigation into this so-called \textit{deadly triad}, along with additional convergence guarantees, would further improve the robustness of our algorithm and broaden its applicability to other domains.

Scalarization-based methods are highly valuable, especially because of their interpretability and flexibility in expressing designer preferences. In particular, when incorporating constraints, these methods also make it straightforward to assess constraint satisfaction through the corresponding dual variables. However, linear scalarization cannot recover nonlinear or concave regions of the Pareto frontier, potentially missing desirable trade-off solutions \citep{roijers13survey,hayes22survey}. While mixtures of convex scalarization functions can help approximate concave regions, this often requires careful tuning and may increase computational effort. Addressing the limitations of scalarization-based approaches is indeed valuable.

Finally, we note that the purpose of our theoretical analysis is not to imply that all assumptions will be checked analytically in practice, but rather to provide predictable behavior and guidance for practical usage of our algorithm.


\end{document}